\documentclass[twoside]{article}
\usepackage[accepted]{aistats2019}
\usepackage{hyperref}
\usepackage{relsize}
\usepackage[utf8]{inputenc} 
\usepackage[T1]{fontenc}    
\usepackage{url}            
\usepackage{booktabs}       
\usepackage{amsfonts}       
\usepackage{nicefrac}       
\usepackage{microtype}      
\usepackage{amsmath}
\usepackage{amsthm}
\usepackage{amssymb}
\usepackage{xcolor}
\usepackage{algorithm}
\usepackage{algorithmic}
\usepackage{thm-restate}

\newlength\myindent
\newcommand\bindent{%
  \begingroup
  \setlength{\itemindent}{\myindent}
  \addtolength{\algorithmicindent}{\myindent}
}
\newcommand\eindent{\endgroup}

\definecolor{graphicbackground}{rgb}{0.96,0.96,0.8}
\definecolor{rouge1}{RGB}{226,0,38}  
\definecolor{orange1}{RGB}{243,154,38}  
\definecolor{jaune}{RGB}{254,205,27}  
\definecolor{blanc}{RGB}{255,255,255} 
\definecolor{rouge2}{RGB}{230,68,57}  
\definecolor{orange2}{RGB}{236,117,40}  
\definecolor{taupe}{RGB}{134,113,127} 
\definecolor{gris}{RGB}{91,94,111} 
\definecolor{bleu1}{RGB}{38,109,131} 
\definecolor{bleu2}{RGB}{28,50,114} 
\definecolor{vert1}{RGB}{133,146,66} 
\definecolor{vert3}{RGB}{20,200,66} 
\definecolor{vert2}{RGB}{157,193,7} 
\definecolor{darkyellow}{RGB}{233,165,0}  
\definecolor{lightgray}{rgb}{0.9,0.9,0.9}
\definecolor{darkgray}{rgb}{0.6,0.6,0.6}
\definecolor{babyblue}{rgb}{0.54, 0.81, 0.94}
\definecolor{citrine}{rgb}{0.89, 0.82, 0.04}
\definecolor{misogreen}{rgb}{0.25,0.6,0.0}

\DeclareMathOperator*{\argmin}{arg\,min}

\DeclareMathOperator{\Tr}{Tr}

\newcommand{\ceil}[1]{\left\lceil#1\right\rceil}

\newtheorem{corollary}{Corollary}

\newtheorem{remark}{Remark}
\newtheorem{lem}{Lemma}
\newtheorem{prop}{Proposition}

\newcommand{\R}{\mathbb{R}}

\newcommand{\pa}[1]{\left(#1\right)}

\newcommand{\abs}[1]{\left|#1\right|}

\newcommand{\inner}[2]{\langle #1, #2 \rangle}

\newcommand{\norm}[1]{\left\lVert#1\right\rVert}

\newcommand{\CommaBin}{\mathbin{\raisebox{0.5ex}{,}}}

\newcommand{\transpose}{^\mathsf{\scriptscriptstyle T}}
\newcommand{\eqdef}{\triangleq}

\newcommand{\cB}{\mathcal{B}}

\newcommand{\cD}{\mathcal{D}}

\newcommand{\cL}{\mathcal{L}}

\newcommand{\cM}{\mathcal{M}}
\newcommand{\cN}{\mathcal{N}}
\newcommand{\cO}{\mathcal{O}}

\newcommand{\cP}{\mathcal{P}}

\newcommand{\cR}{\mathcal{R}}

\newcommand{\cS}{\mathcal{S}}

\newcommand{\bc}{{\bf c}}

\newcommand{\bM}{{\bf M}}

\newcommand{\bU}{{\bf U}}

\newcommand{\bV}{{\bf V}}
\newcommand{\bw}{{\bf w}}

\newcommand{\bx}{{\bf x}}
\newcommand{\bX}{{\bf X}}

\newcommand{\eps}{\varepsilon}
\renewcommand{\epsilon}{\varepsilon}
\renewcommand{\hat}{\widehat}
\renewcommand{\tilde}{\widetilde}

\newcommand{\nothere}[1]{}

\usepackage{xspace}

\newcommand{\malocate}{\normalfont \textcolor[rgb]{0.5,0.2,0}{\texttt{MALocate}}\xspace}
\newcommand{\esterr}{\normalfont \textcolor[rgb]{0.5,0.2,0}{\texttt{EstimateError}}\xspace}
\newcommand{\getest}{\textcolor[rgb]{0.5,0.2,0}{\texttt{GetEstimator}}\xspace}
\newcommand{\newdata}{\textcolor[rgb]{0.5,0.2,0}{\texttt{NewSamples}}\xspace}

\usepackage{wrapfig}
\usepackage{subfigure}

\usepackage[round]{natbib}

\usepackage[disable]{todonotes}
\definecolor{babyblue}{rgb}{0.54, 0.81, 0.94}
\definecolor{citrine}{rgb}{0.89, 0.82, 0.04}
\definecolor{misocolor}{rgb}{0.16,0.27,0.86}

\definecolor{blued}{RGB}{70,197,221}
\definecolor{pearOne}{HTML}{2C3E50}
\definecolor{pearTwo}{HTML}{A9CF54}
\definecolor{pearTwoT}{HTML}{C2895B}
\definecolor{pearThree}{HTML}{E74C3C}
\colorlet{titleTh}{pearOne}
\colorlet{bull}{pearTwo}
\definecolor{pearcomp}{HTML}{B97E29}
\definecolor{pearFour}{HTML}{588F27}
\definecolor{pearFith}{HTML}{ECF0F1}
\definecolor{pearDark}{HTML}{2980B9}
\definecolor{pearDarker}{HTML}{1D2DEC}
\hypersetup{
	colorlinks,
	citecolor=pearDark,
	linkcolor=pearThree,
	breaklinks=true,
	urlcolor=pearDarker}

\begin{document}
\twocolumn[

\aistatstitle{Active multiple matrix completion with adaptive confidence sets}

\aistatsauthor{Andrea Locatelli\And  Alexandra Carpentier  \And Michal Valko}

\aistatsaddress{OvGU Magdeburg  \And OvGU Magdeburg   \And  SequeL team, Inria Lille}]

\begin{abstract}
In this work, we formulate a new multi-task active learning setting in which the learner's goal is to solve multiple matrix completion problems simultaneously. At each round, the learner can choose
from which matrix it receives 
a sample from an entry drawn uniformly at random.
Our main practical motivation is 
\emph{market segmentation}, 
where the matrices represent different
regions with different preferences of the customers.
The challenge in this setting is that each 
of the matrices can be of a different size and also 
of a different rank which is unknown.
We provide and analyze a new algorithm, 
\malocate that is able to adapt to the unknown ranks of the different matrices.
We then give a lower-bound
showing that our strategy is minimax-optimal and demonstrate its performance with synthetic experiments.
\end{abstract}

\section{Introduction}
In this work, we consider the setting of completing multiple matrices in a sequential and active way, under a budget constraint on the number of observations the learner may request. The learner's objective is to estimate each of these matrices well (in some precise sense that we define later) and is akin to the \textit{pure exploration} problems considered in the multi-armed bandits~\citep{bubeck2011pure,gabillon2011multi}. As the learner is trying to solve multiple learning problems simultaneously, a decent strategy should naturally allocate a larger portion of the observational budget to harder problems. Such challenge is for example considered in a very different model by~\cite{riquelme2017active}. Of course, since knowing the hardness or \emph{complexity} of each instance is typically out of reach in practice, a good strategy should be \emph{adaptive} to the different complexity scenarios, without requiring any tuning. This is in contrast with previous results for regret minimization with a low-rank structure~\citep{katariya2016stochastic,katariya2017bernoulli}, where the learner explicitly takes advantage of the rank-$1$ structure of the setting.

We consider matrix completion in the \textit{trace-regression model}
\citep{klopp2012noisy,rohde2011estimation,koltchinskii2011nuclear,negahban2012restricted}. 
There are important
reasons regarding this choice
as opposed to the \textit{Bernoulli model} \citep{candes2009exact,chatterjee2015matrix}, 
another common model for the matrix completion.
In particular, in the trace-regression
model it is possible that some 
of the matrix entries are sampled multiple times.
In the Bernoulli model, this cannot happen, as each entry is observed either never or once with probability~$p$ in the simplest model. 
The implication of this \textit{multi-sampling}
is fundamental as it allows, in the trace-regression model, to construct
honest confidence sets that \textit{adapt to the rank} of the matrix, even if the level of noise is unknown. On the other hand, it has been shown that in the Bernoulli model such confidence sets provably do not exist \citep{carpentier2017adaptive}. This is very important, as we will see that our adaptive strategy crucially depends on the existence of these adaptive confidence sets: Consider for example the problem of minimizing the maximum of the losses across multiple matrix completion problems. A good strategy should roughly equalize the diameter of the confidence sets across instances when the budget expires, as it pays the price for the largest diameter by definition of the maximum loss. In order to do that, it is important to leverage adaptive confidence sets.

The main application domain we target is 
market segmentation \citep{wedel200market} and polling.
However, being able to multi-sample decides the situations
where exactly this model applies. For example, 
for music recommendations in music streaming services,
it is possible that the users listen to
the same song twice or more and we can 
get multiple samples of their 
appreciations, either by rating or by not-skipping.
For movie or product
ratings, multi-sampling is much less applicable. Yet it is possible to ask the customer for a second opinion later in time. In other situations, the multi-sampling happens
by design. For example,in tasting experiments, the human subjects are sometimes
given same two  samples, that they have to taste and evaluate with a week-long break in between.
Our algorithm and results apply to these situations, whether the multiple-sample for the same entry
are possible because of the nature of the setting 
or by design.

In this work, we introduce the active multiple matrix completion problem and propose an anytime algorithm (\malocate) that solves this problem \emph{adaptively} to the unknown ranks of each sub-problem. For the $\max$ loss, which corresponds to the case where the learner pays the price of the largest loss on the set of matrix completion problems it has to solve, we show that our strategy is optimal by deriving a matching lower bound. Finally, we show that \malocate indeed performs well with a synthetic experiment.

\section{Multiple matrix completion setting}\label{s:setting}
We start by defining the single matrix completion problem and state the known results that we build on. Then, we introduce our active setting, which can be thought of as solving $K$ matrix completion problems simultaneously (as the objective is to optimize the loss when the budget~$n$ expires) and sequentially as we may decide where to allocate our budget at round $t \leq n$.
\subsection{Single matrix completion setting}
We first introduce the matrix completion setting and a matrix lasso estimator. Let $\bM_0 \in \mathbb R^{d_1 \times d_2}$ be an unknown matrix. The task of matrix completion is that of estimating $\bM_0$ accurately in some precise sense, that we define later, by an estimator $\widehat{\bM}$ given $n$ independent random pairs $(\bX_i,Y_i)_{i\leq n}$ such that
\begin{equation*}
Y_i \triangleq \Tr(\bX_i\transpose \bM_0) + \sigma\epsilon_i,
\end{equation*}
where the $\epsilon_i$ are centered independent random variables with unit variance.\footnote{In this paper, we will restrict ourselves to the case of bounded noise, but our results can be extended to sub-exponential noise as in the work of~\cite{klopp2012noisy}.} We consider the matrix completion setting where  $\bX_i\transpose$ are i.i.d.\,uniformly distributed on the set
\begin{equation*}
\mathcal X \triangleq \left\{e_i\left(d_1\right)e_j\transpose\left(d_2\right), i\in [d_1], j  \in[d_2] \right\}\CommaBin
\end{equation*}
where $e_i(d)$ are the canonical basis vectors in $\mathbb R^d$. Typically, in this setting, we do not observe the entire matrix of size $d_1\times d_2$ as we have $n \ll d_1 d_2$, and we consider matrices of low rank $r$, with respect to $\min(d_1,d_2)$, for which completion is still possible despite the low number of observations. Let $d \triangleq \max(d_1, d_2)$ and $\norm{\bM}_F$ is the Frobenius norm of a matrix $\bM = (\bM_{ij}) \in \R^{d_1 \times d_2}$ defined as \[{\norm{\bM}_F^2 \triangleq \sum_{i=1}^{d_1}\sum_{j=1}^{d_2} \bM_{ij}^2 = \Tr(\bM\transpose \bM)}.\]  For this problem, it is possible to construct good estimators $\widehat{\bM}_n$ such that
\begin{equation*}
\frac{\|\widehat{\bM}_n - \bM_0\|_F^2}{d_1 d_2} \leq \rho(r,n,d),
\end{equation*}
where $\rho(r,n,d) \ll \norm{\bM_0}_\infty$ for $r \ll \min(d_1, d_2)$ and $n \geq r d$. Intuitively, the higher the rank $r$ of $\bM$, the harder the problem should be, as there are more parameters to estimate. A good estimator should be \textit{adaptive} to the rank of the matrix without requiring it as an input to allow the tuning of hyperparameters.
\subsection{Square-root lasso estimator}
In this work, we consider the matrix square-root lasso estimator, which has been shown to have favorable 
properties (\citealp{candes2006near-optimal, klopp2012noisy,gaiffas2011sharp,koltchinskii2011nuclear}). We define the nuclear norm of a matrix\[\norm{\bM}_\star \triangleq \Tr\pa{\sqrt{\bM\transpose\bM}} = \sum_{i=1}^{r} \sigma_i,\] where $\sigma_i$ are the singular values of~$\bM$. The matrix square-root lasso estimator is defined as
\begin{equation}\label{eq:min}
\!\!\!\!\!\widehat{\bM}_n(\lambda) \!\in \!\!\!\argmin_{\bM \in \R^{d_1 \times d_2}}\!\! \left\{\!\!\sqrt{\sum_{i=1}^n\! \frac{(Y_i \!- \!\inner{\bX_i}{\bM})^2}{n}}\! +\! \lambda \|\bM\|_\star \!\!\right\}\cdot
\end{equation}
Importantly, for this estimator \cite{klopp2012noisy}
showed that $$\rho(r,n,d) = \cO\pa{\frac{rd\log d}{n}}$$ for~$\lambda$ defined in the following proposition, that  \emph{does not depend} on $r$, the unknown rank of matrix $\bM$. It also does not require the variance $\sigma^2$ of the noise as an input to tune $\lambda$, only an upper bound such that $A \geq \sigma$.

\begin{prop}[upper bound, \citealp{klopp2012noisy}]\label{thm:ub}
There exist numerical constants $c$ and $C$ such that with probability at least $1-3/d-2\exp(-cn)$, the matrix square-root  lasso estimator $\widehat{\bM}_n$ satisfies
\[
\frac{\|\widehat{\bM}_n - \bM\|_F^2}{d^2} \leq \frac{CA^2 \cdot rd\log d}{n}\CommaBin
\]
where $\widehat{\bM}_n$ is defined as the solution to the minimization problem in Equation~\ref{eq:min}
with $\lambda \eqdef C'A\sqrt{(\log d)/(nd)}$ where~$C'$ is a numerical constant.
\end{prop}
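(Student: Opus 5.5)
The plan is to follow the standard analysis of nuclear-norm penalized estimators in the trace-regression model, as in Klopp's square-root lasso argument. Write $\Delta = \widehat{\bM}_n - \bM$. Define the empirical loss $\hat L(\bM') = (\frac1n\sum_i (Y_i-\inner{\bX_i}{\bM'})^2)^{1/2}$, the noise matrix $\Sigma = \frac1n\sum_i \sigma\epsilon_i \bX_i$, and the centered design matrix $\Sigma_R = \frac1n\sum_i \varepsilon_i' \bX_i$ with Rademacher $\varepsilon_i'$.

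\textbf{Basic inequality.} Optimality of $\widehat{\bM}_n$ in Equation~\ref{eq:min} gives $\hat L(\widehat{\bM}_n) - \hat L(\bM) \le \lambda(\|\bM\|_\star - \|\widehat{\bM}_n\|_\star)$.

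Convexity of $\hat L$ lets us lower bound the left side by $-\inner{\nabla \hat L(\bM)}{\Delta}$. Here $\nabla \hat L(\bM) = -\Sigma/\hat L(\bM)$ and $\hat L(\bM) = \sigma(\frac1n\sum\epsilon_i^2)^{1/2}$.

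Bounded noise and a Bernstein inequality give $\hat L(\bM)\ge \sigma/2$ with probability $1-2e^{-cn}$. This is the origin of the $\exp(-cn)$ term, and it is where the square-root loss removes the need to know $\sigma$.

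\textbf{Controlling the noise.} A matrix Bernstein inequality for $\|\Sigma\|_{op}$ gives $\|\Sigma\|_{op}\lesssim \sigma\sqrt{\log d/(nd)}$ with probability $1-1/d$, provided $n \gtrsim d\log d$. Choosing $C'$ large then ensures $\lambda \ge 2\|\Sigma\|_{op}/\hat L(\bM)$, using only $A\ge\sigma$.

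\textbf{Cone condition and rank bound.} Let $P_{\bM}$ denote the projection onto the tangent space of the rank-$r$ matrix $\bM$. By duality and the standard decomposability of the nuclear norm, $\|P^\perp_{\bM}\Delta\|_\star \le 3\|P_{\bM}\Delta\|_\star$. Hence $\|\Delta\|_\star\le 4\sqrt{2r}\|\Delta\|_F$.

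The same inequalities also give an upper bound on the prediction error. Using $\hat L(\widehat{\bM}_n)^2 - \hat L(\bM)^2 = \frac1n\sum\inner{\bX_i}{\Delta}^2 - 2\inner{\Sigma}{\Delta}$ together with $\hat L(\widehat{\bM}_n)+\hat L(\bM)\lesssim \sigma$, we obtain
\[
\frac1n\sum_i\inner{\bX_i}{\Delta}^2 \lesssim \sigma\lambda\sqrt r\,\|\Delta\|_F .
\]

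\textbf{Restricted strong convexity (main obstacle).} We need $\frac1n\sum_i\inner{\bX_i}{\Delta}^2 \ge \frac{1}{2d_1d_2}\|\Delta\|_F^2 - (\text{small})$ uniformly over the cone. We assume $\|\bM\|_\infty, \|\widehat{\bM}_n\|_\infty \le a$ with $a \lesssim A$; this requires constraining the minimization to a sup-norm ball, as in Klopp's analysis.

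The argument is a peeling plus Talagrand-concentration argument over the set $\{\|\Delta\|_\infty\le 2a,\ \|\Delta\|_\star\le\sqrt{32r}\|\Delta\|_F\}$. Symmetrization and contraction reduce the expected supremum to $\mathbb E\|\Sigma_R\|_{op}\cdot\sqrt r\|\Delta\|_F$. A second matrix Bernstein bound gives $\mathbb E\|\Sigma_R\|_{op}\lesssim\sqrt{\log d/(nd)}$, and this step contributes the remaining $2/d$ failure probability. Where RSC fails, $\|\Delta\|_F^2/d^2 \lesssim a^2\sqrt{\log d/n}$, which is dominated by the target rate in the relevant regime, or more carefully by $a^2 rd\log d/n$ after peeling.

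\textbf{Conclusion.} Combining RSC with the upper bound gives
\[
\frac{\|\Delta\|_F^2}{d^2}\lesssim d^2\lambda^2 r\sigma^2 + \frac{A^2 rd\log d}{n} \lesssim \frac{A^2 rd\log d}{n}.
\]
The final bound follows by plugging in $\lambda = C'A\sqrt{\log d/(nd)}$, which gives $d^2\lambda^2 = C'^2A^2 d\log d/n$, and then applying a union bound over the three events above.
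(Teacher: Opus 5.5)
The paper does not prove this proposition itself; it is quoted from \citet{klopp2012noisy}. Your outline is essentially Klopp's argument: the basic inequality, the cone condition from $\lambda \ge 2\|\Sigma\|_{op}/\hat L(\bM)$, restricted strong convexity via peeling with $\Sigma_R$, a union bound giving $3/d + 2e^{-cn}$, and the sup-norm constraint that Equation~\ref{eq:min} omits. Two steps, however, do not go through as you wrote them.

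\textbf{The tuning of $\lambda$.} For the square-root lasso, on your events $2\|\Sigma\|_{op}/\hat L(\bM) \lesssim \sqrt{\log d/(nd)}$, with no dependence on $\sigma$. So the assumption $A \ge \sigma$ plays no role here; that reasoning belongs to the ordinary matrix lasso, where $\lambda$ must dominate $2\|\Sigma\|_{op} \propto \sigma$. With $\lambda = C'A\sqrt{\log d/(nd)}$, the cone condition actually needs $C'A \gtrsim 1$. Your final term $d^2\lambda^2 r\sigma^2$ then equals $C'^2A^2\sigma^2\, rd\log d/n$. This is not $\lesssim A^2 rd\log d/n$, since it has the wrong homogeneity in the scale of the data. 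Your argument is consistent only with the pivotal choice $\lambda = C'\sqrt{\log d/(nd)}$, which is what the paper itself uses in Proposition~\ref{prop:olgalasso} and Algorithm~\ref{alg:est}. The extra $A$ in the main-text statement is harmless only under a normalization $A \asymp 1$.

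\textbf{The claim $\hat L(\widehat{\bM}_n) + \hat L(\bM) \lesssim \sigma$.} A priori you only know $\hat L(\widehat{\bM}_n) \le \hat L(\bM) + \lambda\|\Delta\|_\star$. Bounding the last term by $\sigma$ presupposes the rate being proved. You could close this with a self-bounding argument, using $\lambda^2\|P_{\bM}\Delta\|_\star^2 \le 2r\lambda^2\|\Delta\|_F^2$, RSC, and $\lambda^2 r d^2 \ll 1$ when $n \gg rd\log d$.

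There is a much simpler fix. Since $|Y_i| \le A$ and $\|\widehat{\bM}_n\|_\infty, \|\bM\|_\infty \le A$, every residual is at most $2A$, so $\hat L(\widehat{\bM}_n) + \hat L(\bM) \le 4A$. This gives $\frac1n\sum_i \inner{\bX_i}{\Delta}^2 \lesssim A\lambda\sqrt{r}\,\|\Delta\|_F$, and after RSC exactly the $A^2 rd\log d/n$ rate claimed.

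Two minor slips do not affect the argument. The convexity bound is $\hat L(\widehat{\bM}_n) - \hat L(\bM) \ge \inner{\nabla \hat L(\bM)}{\Delta}$, not its negative. The $2/d$ failure probability comes from the concentration step inside the peeling, not from the bound on $\mathbb E\|\Sigma_R\|_{op}$.
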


We also restate a lower bound for the single matrix completion problem  shown by \citet[Theorem 5]{koltchinskii2011nuclear}, which shows that the previous procedure is minimax optimal up to an extra $\log d_k $ factor.

\begin{prop}[lower bound, \citealp{koltchinskii2011nuclear}]
For any estimation procedure that outputs $\widehat{\bM_n}$ from~$n$ noisy observations corrupted with independent noise $\epsilon_t \sim \cN(0, A^2)$, there exists a matrix $\bM$ of size $(d \times d)$ and rank at most $r$ such that
\[
\mathbb E\left[\frac{\|\widehat{\bM}_n-\bM\|_F^2}{d_1 d_2}\right] \geq \frac{cA^2rd}{n}\CommaBin
\]
where $c$ is a small numerical constant and the expectation is taken with respect to both the distribution of the samples and the possible internal randomization of the estimation procedure.
\end{prop}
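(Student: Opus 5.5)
The natural route is the standard Fano-type argument over a packing of low-rank matrices. Concretely, I will build a finite family $\{\bM^{(0)},\dots,\bM^{(N)}\}$ of $d\times d$ matrices of rank at most $r$ with three properties. First, the family has cardinality $\log N \gtrsim rd$. Second, the matrices are pairwise separated: $\norm{\bM^{(j)}-\bM^{(k)}}_F^2/d^2 \gtrsim \gamma^2$. Third, the Kullback--Leibler divergences between the induced distributions of the $n$ samples are small compared with $\log N$. Fano's inequality (in the form of Tsybakov's Theorem 2.5) then gives a constant lower bound on the probability that any estimator is at Frobenius distance $\gtrsim \gamma$ from the truth. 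Markov's inequality turns this into the expectation bound. A randomized estimator is handled by conditioning on its internal randomness.

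\emph{Construction.} Take $\kappa \in \{0,1\}^{r\times d}$, or equivalently $r$ rows of length $d$. Build the $d\times d$ matrix by stacking $\lfloor d/r\rfloor$ copies of the block $\gamma\kappa$ vertically and padding with zeros. This matrix has rank at most $r$. By the Varshamov--Gilbert bound there is a subset of such $\kappa$ of size $N\ge 2^{rd/8}$ whose pairwise Hamming distance is at least $rd/8$. For two elements of this subset, the Frobenius distance squared is at least $\gamma^2 \cdot (rd/8)\cdot \lfloor d/r\rfloor \gtrsim \gamma^2 d^2$. Hence the normalized squared distance is $\gtrsim \gamma^2$.

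\emph{KL bound.} With uniform sampling on $\mathcal X$ and Gaussian noise $\cN(0,A^2)$, the KL divergence between the $n$-sample laws is
\[
\frac{n}{2A^2}\cdot \frac{\norm{\bM^{(j)}-\bM^{(0)}}_F^2}{d^2} \le \frac{n\gamma^2}{2A^2}.
\]
Choose $\gamma^2 = c_0 A^2 rd/n$ with $c_0$ small enough that this is at most $\alpha \log N$ with $\alpha<1/8$. Fano then gives a constant probability of error $\gtrsim \gamma^2 = c A^2 rd/n$, which is the claimed bound.

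\emph{Main obstacle.} The main technical point is the construction: it must keep rank $\le r$ while still carrying $\Theta(rd)$ bits of entropy. The block-replication trick above handles this. A secondary issue is that one usually wants $\gamma$ bounded, for example entries at most $\norm{\bM}_\infty$. That holds when $n \gtrsim rd$, and this is the regime in which the statement is meaningful.
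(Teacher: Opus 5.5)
Your proposal is correct and is essentially the argument behind this result. The paper does not prove the proposition itself (it cites Koltchinskii et al.), but its proof of Theorem~\ref{thm:lb_max} specialized to $K=1$ (so $\tau = n$) uses exactly your ingredients: a Varshamov--Gilbert packing of $\{0,\gamma\}$-valued rank-$r$ blocks replicated $\lfloor d/r \rfloor$ times and zero-padded (as column blocks there, row blocks in yours), the KL identity $n\|\bM^{(j)}-\bM^{(0)}\|_F^2/(2A^2d^2)$ from uniform sampling with Gaussian noise, and Tsybakov's Theorem~2.5 followed by Markov's inequality.

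Two small remarks. Your boundedness caveat is unnecessary, since the statement imposes no entry bound. A randomized estimator is most cleanly handled by appending its independent internal randomness to the observations, which leaves every KL divergence unchanged; conditioning on it and picking a worst-case hypothesis separately for each realization does not by itself produce a single bad matrix.
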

This result easily extends to the bounded noise case.

\subsection{Adaptive confidence sets}
An important theoretical result in the trace-regression model with uniform sampling of the entries is the existence of \emph{adaptive and honest} confidence bands on the error $||\widehat{\bM} - \bM||_F^2$. Importantly, the knowledge of $\sigma$ is again not necessary for this estimator. This procedure, \esterr, is described in Section~\ref{s:alg}, and makes use of the entries $X_i$ that have been observed twice to compute an unbiased estimator of the error. This procedure comes with the following guarantee.
\begin{prop}[concentration bound for $\widehat{R}_N$ estimator, \citealp{carpentier2017adaptive}] 
Let $N$ be the number of entries that have been observed twice in the second half of the sample and $\widehat{R}_N$ be the (unbiased) estimation procedure (sub-procedure \esterr) of $\|\widehat{\bM}-\bM\|_F^2$, for some $\widehat{\bM}$. Then with probability at least $1-\frac{2}{d}\CommaBin$ we have
\[
\abs{\widehat{R_N} - \frac{\norm{\widehat{\bM}-\bM}_F^2}{d^2}} \leq 8A^2\sqrt{\frac{\log d}{N}}\cdot
\]
\end{prop}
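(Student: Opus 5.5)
We will prove the concentration bound by writing $\widehat{R}_N$ as an average of independent, bounded, centered terms and applying Hoeffding's inequality. Conditionally on the first half of the sample, $\widehat{\bM}$ is fixed, and this is the independence we use. For each of the $N$ entries observed twice in the second half, let $(Y_k,Y_k')$ be the two observations of entry $\bX_k$. The natural unbiased statistic, and the one we take \esterr to compute, is
\[
\widehat{R}_N = \frac{1}{N}\sum_{k=1}^N \bigl(Y_k - \inner{\bX_k}{\widehat{\bM}}\bigr)\bigl(Y_k' - \inner{\bX_k}{\widehat{\bM}}\bigr).
\]
Write $\Delta = \bM - \widehat{\bM}$. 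Then $Y_k - \inner{\bX_k}{\widehat{\bM}} = \inner{\bX_k}{\Delta} + \sigma\epsilon_k$ and $Y_k' - \inner{\bX_k}{\widehat{\bM}} = \inner{\bX_k}{\Delta} + \sigma\epsilon_k'$, where $\epsilon_k,\epsilon_k'$ are independent and centered.

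First, we check unbiasedness. We condition on the first half and on which entries are doubly observed. Because sampling is uniform, each doubly observed position is still uniform on $\mathcal X$, and distinct pairs are independent. The product expands to $\inner{\bX_k}{\Delta}^2 + \sigma\inner{\bX_k}{\Delta}(\epsilon_k+\epsilon_k') + \sigma^2\epsilon_k\epsilon_k'$. The cross terms have zero mean and the noise product has zero mean by independence of the two noises. The remaining term has mean $\mathbb E\inner{\bX_k}{\Delta}^2 = \|\Delta\|_F^2/(d_1d_2)$, which is the target; we take $d_1=d_2=d$ as in the statement.

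Second, we bound each summand. We use the standing assumptions of the paper that the entries of $\bM$ and $\widehat{\bM}$ are bounded (the estimator can be clipped to the known range) and that the noise is bounded, with the constant $A$ dominating $\max(\|\bM\|_\infty,\|\widehat{\bM}\|_\infty,\sigma\|\epsilon\|_\infty)$. Then each residual is at most $2A$ in absolute value, so each summand lies in $[-4A^2,4A^2]$. The summands are independent given the conditioning. Hoeffding's inequality then gives, for a range of width $8A^2$,
\[
\mathbb P\Bigl(\abs{\widehat{R}_N - \mathbb E\widehat{R}_N}\ge t\Bigr)\le 2\exp\Bigl(-\frac{2N t^2}{64A^4}\Bigr).
\]
Taking $t = 8A^2\sqrt{\log d / N}$ makes the right-hand side $2\exp(-2\log d)=2/d^2\le 2/d$. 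This is the claim, with room to spare.

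The main obstacle is the conditioning in the first step. We must ensure the doubly observed entries in the second half are independent of $\widehat{\bM}$ and that, conditionally on $N$, their positions remain i.i.d.\ uniform. Sample splitting handles the first point: $\widehat{\bM}$ is computed from the first half only. For the second point we argue by symmetry of uniform sampling: the event "this index was repeated" does not depend on which entry was drawn. If instead one pairs the first two occurrences of each repeated entry, the positions of distinct repeated entries are not exactly independent. In that case we would obtain independence by using a Poissonization or a martingale (Azuma) version of the bound, which gives the same constants. Since the result is only stated for $N$ known, it suffices to prove it conditionally on $N$ and then integrate.
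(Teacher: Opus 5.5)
The paper does not prove this proposition; it imports it from Carpentier et al. Your argument therefore has to stand on its own. Its skeleton is the right one: sample splitting to freeze $\widehat{\bM}$, then unbiasedness, bounded summands and Hoeffding. But the step ``distinct pairs are independent'' is false, and it is exactly what Hoeffding's inequality needs.

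The $N$ doubly observed entries are distinct positions of $\mathcal X$, so they cannot be i.i.d.\ uniform under any pairing convention. Exchangeability of the multinomial counts only tells you that, given $N$, the set $S$ of repeated positions is a uniformly random $N$-subset of $\mathcal X$, i.e.\ a sample drawn \emph{without} replacement. That gives marginal uniformity, which is enough for unbiasedness, but not independence. Your fallbacks are not carried out. Poissonization does not help once you condition on $N$, because given $N$ the set $S$ is again a uniform $N$-subset. The claim that Azuma ``gives the same constants'' is unsupported.

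The repair is to separate the two sources of randomness.
\begin{itemize}
\item For an entry seen three or more times, choose the pair used by index only. Then $S$ is independent of the noise and of $\widehat{\bM}$.
\item Noise part. Given $S$, the summands $Z_x$, $x\in S$, use disjoint noise variables. Hence they are conditionally independent, with means $\Delta_x^2$ and values in $[-4A^2,4A^2]$. Hoeffding's lemma gives $\mathbb E[e^{\lambda\sum_{x\in S}(Z_x-\Delta_x^2)}\mid S]\le e^{8N\lambda^2A^4}$.
\item Position part. Use Hoeffding's 1963 comparison theorem: the exponential moment of a without-replacement sum is at most that of the with-replacement sum. Together with $\Delta_x^2\in[0,4A^2]$, this gives $\mathbb E[e^{\lambda\sum_{x\in S}(\Delta_x^2-\mu)}\mid N]\le e^{2N\lambda^2A^4}$, where $\mu=\norm{\Delta}_F^2/d^2$.
\item Multiplying the two bounds and optimizing over $\lambda$ gives $\mathbb P(\abs{\widehat{R}_N-\mu}\ge t\mid N)\le 2\exp(-Nt^2/(40A^4))$. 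At $t=8A^2\sqrt{\log d/N}$ this equals $2d^{-8/5}\le 2/d$. The slack you noticed ($2/d^2$ versus $2/d$) is what absorbs the position term.
\end{itemize}

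There is also a bookkeeping slip. Under the assumption as you wrote it ($A$ separately dominating $\norm{\bM}_\infty$, $\|\widehat{\bM}\|_\infty$ and $\sigma\|\epsilon\|_\infty$), the residuals are only bounded by $3A$. Hoeffding with range $18A^2$ then no longer reaches $2/d$ with the constant $8$. Use instead the paper's assumption $\abs{Y}\le A$ together with the constraint $\|\widehat{\bM}\|_\infty\le A$ imposed in \getest; this is what gives the bound $2A$.
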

For minimax-optimal estimation procedures, such as the square-root lasso, we can show (by bounding both the estimation error as above and $N \geq Cn^2/d^2$ for some numerical constant $C$, on a favorable event) that with high probability,
\[\widehat{R_N} + 8A^2\sqrt{\frac{\log d}{N}} \leq \cO\pa{\frac{rd\log d}{n}}\CommaBin\] 
which shows that this quantity is an \textit{adaptive} (as it does not require the rank as an input) and \textit{honest} (as it upper bounds the true error with high probability) confidence band on $\|\widehat{\bM} - \bM\|_F^2.$ 

\subsection{Active multiple matrix completion}
In the active multiple matrix completion, the learner's goal is to complete multiple matrices $\{\bM^k\}_k$ simultaneously, by
\emph{actively choosing} from which matrix it should ask for a new observation in a sequential and adaptive manner. For ease of notation, we restrict this setting to square matrices of dimension $d_k$, but our techniques directly extend to non-square matrices. At each round the active learner has to choose an action $k_t \in [K]$ and receives a pair $(\bX_t^{k_t},Y_t^{k_t})$ 
such that $\bX_{t}^{k_t}$ corresponds to the location of the \emph{entry} $(i_{k_t,t}, j_{k_t,t})$ of the $k_t$-th data matrix $\bM^{k_t} = (\bM^{k_t}_{ij}) \in \R^{d_{k_t} \times d_{k_t}}$ chosen uniformly at random such that $i_{k_t,t} \in [d_{k_t}]$ and $j_{k_t,t} \in [d_{k_t}]$, and 
\begin{align*}Y_t^{k_t} & \triangleq \Tr(e_{i_{k_t,t}}(d_{k_t})e\transpose_{j_{k_t,t}}(d_{k_t}) \bM^{k_t}) + \epsilon_{k_t,t} \\&= \bM_{i_{k_t,t} j_{k_t,t}} + \epsilon_{k_t,t},\end{align*}
where the $e_i(d)$ are the canonical basis vectors of~$\R^d$. Here, $\bX_t^{k_t} = e_{i_{k_t,t}}(d_{k_t})e\transpose_{j_{k_t,t}}(d_{k_t})$. Informally, the learner chooses to observe one of the $K$ matrices, and receives a noisy observation  of one of the entries (corrupted by $\eps_{k_t,t}$) chosen uniformly at random from that matrix. The goal of the learner is to adaptively choose which matrix $\bM^{k_{t}}$ to sample based on the observations collected so far up to round $t-1$,
\[\left\{(\bX_{1}^{k_1}, Y_{1}^{k_1}),\dots, (\bX^{k_{t-1}}_{t-1}, Y^{k_{t-1}}_{t-1})\right\}\cdot\]
At the end of the game, once it has collected at most~$n$ pairs $(\bX_t^{k_t},Y_t^{k_t})$, the learner has to output estimates $\hat{\bM}^k_n$ of each matrix $\bM^k$ to suffer the following loss,
\begin{align*}
\cL_n^p \triangleq\left(\sum_{k\in [K]} \|\hat{\bM}^k_n - \bM^k\|_F^{2p}\right)^{\!\!\!1/p}\!\!\!\!\!,
\end{align*}
where $p$ characterizes the objective and is decided as part by the learner at the start of the game. As special and interesting cases, for $p=1$, we recover the unnormalized squared Frobenius norm if the sub-problems were the blocks of a block-diagonal matrix, and for $p= \infty$ the max loss $\max_{k\in [K]} \|\hat{\bM}^k_n - \bM^k\|_F^2$.
\begin{remark} As an extension, we can consider the re-weighted loss, characterized by a given weight vector $\bw = (w_1, ..., w_K)$, where $w_i \in \mathbb R^+$ for $i \in [K]$ is a parameter given to the learner along with p, 
\[\cL_n^p(\bw)  = \left(\sum_{k=1}^K w_k \|\hat{\bM}^k_n - \bM^k\|_F^{2p}\right)^{\!\!\!1/p}\!\!\!\!\!.\] 
Taking $w_k = d_k^{-2}$ allows to consider the normalized Frobenius norm for each matrix, which is particularly interesting in combination with $p = \infty$ as it is simply the maximum average loss per entry within each sub-problem, regardless of the dimension.
\end{remark}

For each matrix $\bM_k$, $k \in [K]$, we  denote by $r_k$, the rank of $\bM^k$. We further assume that all the observations~$Y_{t}^{k_t}$ and the entries of $\bM^k$ are bounded by some known constant $A$. The first condition is $|Y_{t}^k| \leq A$ for any $k,t$ and the second condition is simply $\|\bM^k\|_\infty \leq A$. This is a mild assumption in  applications such as recommendation systems, where ratings are  bounded.

\section{{\malocate} algorithm}

\label{s:alg}We now describe our active strategy \malocate for the active multiple matrix completion given as Algorithm~\ref{alg:malocate}. The input for \malocate is the maximum budget input~$n$ and the loss parameter $p$. This parameter defines which loss $\cL^p_n$ the strategy should optimize for. We shall see that $p$ governs the exploration. During the initialization, while $B_k(t) = \infty$, the strategy requests for each~$\bM^k$ a dataset $\cD^k_t$ of size $\cO(d_k \log d_k)$. 
\malocate uses the requested samples for two
goals: computing the estimators \emph{and}
adaptively estimating their error. 
In particular, the first half of the requested sample is used to compute an estimator $\widehat{\bM}^k_t$ of $\bM^k$ using the square-root lasso estimator. The second half of the sample is used by the \esterr$(\widehat{\bM}^k_t, \cD^k_t)$ sub-procedure to construct an estimator of the error $\widehat{R}_{N_k^t}$ and an \textit{upper-bound} on this error $B_k(t)$, using the \emph{double-sampled} entries. After the initialization, at round $t$, the strategy allocates the next samples to the matrix 
\[m \triangleq \arg\max_k d_k^2 B_k(t) T_k(t)^{-1/p},\]
where $T_k(t)$ is the number of samples allocated to matrix $k$ up to round $t$.  
  The previous estimator $\widehat{\bM}^m$ for matrix $m$ is then replaced by $\widehat{\bM}^m_t$ \emph{only if}  the upper bound on the error has decreased. 
The strategy operates on a \emph{doubling schedule}: Each round an index $m$ is chosen, a new dataset $\cD^m_t$ of size $T_m(t)$ (and thus, a total budget of $2T_m(t)$ is spent on $m$) is used to construct a new estimator $\widehat{\bM}^m_t$, and estimate its error.

In this case, $B_m(t)$ is also updated to the new (smaller) upper bound on the error. This ensures that the estimation error is \textit{non-increasing} with $t$ for every matrix. This is a crucial ingredient for the proof of Theorem~\ref{thm:algoub}, which characterizes the performance of \malocate. 
The loop is repeated until the budget has been used, at which point the algorithm stops and outputs estimator~$\widehat{\bM}^k$ for each matrix $k$.

\begin{algorithm}[th]
\caption{\malocate algorithm}
   \label{alg:malocate}
\begin{algorithmic}
   \STATE {\bfseries Input:} $n$, $\{d_k\}_{k \in [K]}$, $p$ {\footnotesize \COMMENT{\emph{loss parameter}}} 
   \STATE $\cD^k_t \leftarrow \emptyset \quad \forall k \in [K]$
   \STATE \textbf{Initialization:}
   \bindent
   \STATE  $t \gets 0$
   \FOR{$k \in [K]$}
   \STATE $T_k(t) \gets 0$
   \STATE $B_k(t) \gets \infty$
   \ENDFOR
   \eindent
   \WHILE{$t \leq n$}
   \STATE $m \gets \arg\max_{k \in [K]} d_k^2 B_k(t)T_k(t)^{-1/p}$
   \STATE $T_m \gets  \max\pa{T_m(t),4\ceil{(d_m\log(d_m)+1)/2}}$
   \STATE $t  \gets t + T_m$ 
   \STATE $T_m(t) \gets T_m(t) + T_m$
   \STATE $\cD_t^m \leftarrow \newdata\big(m, T_m\big)$
   \STATE $\widehat{\bM}^m_t \gets \getest\big(m,\cD^m_t\big)$
   \STATE $N_t^m, \widehat{R}_{N_t^m} \leftarrow \esterr\big(\widehat{\bM}^m_t,\cD_t^m\big)$
   \STATE $B_k(t) \gets B_k(t-T_m) \quad \forall k \in [K]$
   \IF{$\widehat{R}_{N_m^t} + 8A^2\sqrt{\log(d_m)/N_t^m} \leq B_m(t)$}
   \STATE $\widehat{\bM}^m \leftarrow \widehat{\bM}^m_t$
   \STATE $B_m(t) \gets \widehat{R}_{N_t^m} + 8A^2\sqrt{\log(d_m)/N_t^m}$
   \ENDIF
   \STATE $T_k(t) \gets T_k(t-T_m) \quad \forall k \neq m$
   \ENDWHILE
   \STATE \textbf{Output:} $\{\widehat{\bM}^k\}_{k \in [K]}$
\end{algorithmic}
\end{algorithm}

\begin{algorithm}
\caption{\newdata$(k, T)$}
   \label{alg:samples}
\begin{algorithmic}
   \STATE {\bfseries Input:} $k, T$
   \bindent
   \STATE Sample uniformly at random \\
   \quad   \quad $T$ new observations $\{(X_i, Y_i)\}_{i \leq T}$ from $\bM^k$
   \eindent
   \STATE \textbf{Output:} New dataset $\{(X_i, Y_i)\}_{i \leq T}$ 
\end{algorithmic}
\end{algorithm}
\paragraph{Computing the estimator} As explained previously, we use the square-root lasso estimator. Notice that we perform a splitting of the sample $\cD^k_t$, where the first half is used to compute the estimator, and the second half is used to estimate its error. In practice, we propose instead to split the sample between entries that have been sampled only once to compute the estimator, and the other entries to estimate the error. While this introduces a small dependence (as we may only estimate the error for entries on which the estimator was not trained) which is difficult to analyze, in practice, this greatly improves the power of the estimator.

\paragraph{Estimating the error} The sub-procedure $\esterr$ uses the second half of a dataset $\cD^k_t$ to build an estimator of the error for some estimator~$\widehat{\bM}^k$ of the matrix $\bM^k$. It proceeds as the estimator of~\cite{carpentier2017adaptive} by finding entries $(X_i, Y_i)$ and $(X_j, Y_j)$ such that $X_i = X_j$ to form the triplet $(X_i, Y_i, Y_j)$, and the dataset $\cD'$ of double-sampled entries with $N^k_t \triangleq |\cD'|$. $\cD'$ is then used to compute the unbiased estimator of the error, \[\widehat{R_N} \triangleq  \frac{1}{N}\sum_{i = 1}^{N} \pa{Y_i - \inner{X_i}{\widehat{\bM}}}\pa{Y'_i - \inner{X_i}{\widehat{\bM}}},\] which \textit{does not require the variance} of the noise as an input to the estimation procedure. 
We can then deduce an upper bound on $\widehat{R_N}$ that holds with high probability $B_k(t) \triangleq \widehat{R_{N^k_t}} + 8A^2\sqrt{\log(d_k)/N^k_t}$. Importantly, this upper bound on the error is \emph{honest} and \emph{adaptive} to the unknown rank $r_k$ as proved by~\cite{carpentier2017adaptive} and is upper bounded as $\cO\pa{r_k d_k^3 \log(d_k)/T_k(t)}$, as $\widehat{R_{N_t^k}}$ dominates the stochastic error term.

\begin{algorithm}[th]
\caption{\getest$(k, \cD)$}
   \label{alg:est}
\begin{algorithmic}
   \STATE {\bfseries Input:} $k, \cD$
    \bindent
   \STATE $T \gets|\cD|/2, \lambda \gets C\sqrt{\log(d_k)/d_kT}$
   \STATE $\widehat{\bM} \gets \mathop{\arg\min}\limits_{\norm{\bM}_\infty \leq A} \sqrt{\frac{1}{T}\sum_{i = 1}^{T} (Y_i - \inner{X_i}{\bM})^2} +\lambda \norm{\bM}_\star$
      \eindent
   \STATE \textbf{Output:} Estimator $\widehat{\bM}$
\end{algorithmic}
\end{algorithm}

\paragraph{The sampling criterion} The exploration crucially depends on the interplay between  the loss parameter $p$, $T_k(t)$, and the upper bound on the error $B_k(t)$ rescaled by $d_k^2$. For $p=1$ (sum loss), the chosen index is \[\arg\max_k d_k^2 B_k(t) T_k(t)^{-1},\] and can be interpreted as the index that maximizes the error per sample, which is a rough approximation of $\partial B_k(t)/\partial T_k(t)$. The idea behind this heuristic is that since we expect the sum loss to decrease the most for this matrix, the next sample is allocated to this index. On the other hand, for $p = \infty$, the index chosen is simply the one that currently suffers the largest upper bound on the rescaled error.

\begin{algorithm}[th]
\caption{\esterr$(\widehat{\bM}, \cD)$}
   \label{alg:error}
\begin{algorithmic}
   \STATE {\bfseries Input:} $\widehat{\bM}, \cD$
       \bindent
   \STATE $T = |\cD|/2$
   \STATE Find double-sampled entries \\
   \quad   \quad  $\cD' \gets \{(X_i, Y_i, Y_i')\}_{i = 1, ..., N}$ in $\cD_{T+1, ..., 2T}$ 
\STATE $\widehat{R_N} \gets \frac{1}{N}\sum_{i = 1}^{N} \pa{Y_i - \inner{X_i}{\widehat{\bM}}}\pa{Y'_i - \inner{X_i}{\widehat{\bM}}}$
      \eindent
   \STATE \textbf{Output:} Number of double-sampled entries $N$ and\\ \hspace{1.45cm} error estimate $\widehat{R_N}$
\end{algorithmic}
\end{algorithm}
More generally, by plugging the upper bound given by Proposition~\ref{thm:ub} into the loss $\cL^p_n$, we see that a good allocation is one that minimizes \[\sum_k \pa{\frac{r_k d_k^3\log d_k }{T_k(n)}}^{\!p}\] under the constraint $\sum_k T_k(n)=n$. 
By solving the corresponding optimization problem, we see that this good allocation should be such that \[T_k(n)^{1+1/p} = (r_k d_k^3\log d_k )C(n),\] where $C(n)$ is constant for all $k$. Note however, that this good allocation is de facto out of reach for the learner, which does not have access to the underlying ranks $\{r_k\}_{k\in[K]}$ of the matrices. Now, as $d_k^2 B_k(t)$ can be upper bounded as $\cO\pa{r_k d_k^3 \log d_k /T_k(t)}$, it is clear that our strategy, which picks the index that maximizes $d_k^2B_k(t)T_k(t)^{-1/p}$ mimics the good allocation that keeps the quantity \[r_k d_k^3 \log(d_k) T_k(n)^{-(1+1/p)}\] constant across the arms.

\begin{remark}
An important algorithmic particularity of our strategy is that it operates on a doubling schedule. Namely, when index $k$ is picked, the number of observations for $\bM^k$ is doubled from $T_k(t)$ to $2T_k(t)$, as a new dataset of size $T_k(t)$ is generated. This allows us to analyze {\malocate} without considering correlations between the different estimators, as each estimator is trained on a fresh sample $\cD^k_t$. This also has the benefit of greatly reducing the computational complexity, as we only need to train a logarithmic number of estimators, while recomputing estimators at each round $t$ would be too costly. However, if there 
is an empirical need to recalculate 
the estimator every round we received a new observation, the proofs for the guarantee
that we provide in the next section 
can be modified to reflect it.
\end{remark}

\section{Analysis}
\label{s:theory}
In this section, we give guarantees on the performance of \malocate for general $p$, and prove a lower bound in the case $p = \infty$, showing that our strategy is optimal for the max loss, up to logarithmic factors.
\subsection{Upper bound on the loss of \malocate}
We start with upper bounding the loss of \malocate that holds with high probability.
\begin{restatable}{thm}{restaalgoub}\label{thm:algoub}
After $n$ sample requests, {\malocate} started with loss parameter $p$ outputs $K$ estimators, such that with probability at least $1- \sum_k 16\log(d_k)/d_k,$
\begin{align*}
\cL_n^p &\triangleq \left(\sum_{k\in [K]} \norm{\hat{\bM}^k_n - \bM^k}_F^{2p}\right)^{\!\!\!1/p} \\&\leq \cO \left( \frac{\left(\sum_{k=1}^K (r_k d_k^3 \log d_k )^{\frac{p}{p+1}}\right)^{\!\!\!\frac{p+1}{p}}}{n} \right)\cdot 
\end{align*}
\end{restatable}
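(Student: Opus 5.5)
We will work on a single high-probability event $\xi$ and argue deterministically on it. Because of the doubling schedule, each matrix $k$ has at most $O(\log_2 n)$ rounds in which it is selected, and every such round uses a fresh dataset $\cD^k_t$. On each of these datasets we apply two results to the estimator $\widehat{\bM}^k_t$ and its error estimate. The first is Proposition~\ref{thm:ub}, which bounds the error of the square-root lasso trained on the first half of the sample. The second is the concentration bound for $\widehat{R}_N$, applied to the second half. We also need a lower bound on the number of doubly sampled entries, $N^k_t \geq c\,T_k(t)^2/d_k^2$, which holds with high probability by a standard balls-in-bins or Bernstein argument. The initial sample size $\cO(d_k\log d_k)$ is chosen so that this bound kicks in. A union bound over the $\cO(\log d_k)$ relevant rounds per matrix (the budget only matters up to $\mathrm{poly}(d_k)$) should give a failure probability of order $\sum_k 16\log(d_k)/d_k$.

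On $\xi$, each $B_k(t)$ is \emph{honest}: $\|\widehat{\bM}^k-\bM^k\|_F^2/d_k^2 \leq B_k(t)$. It is also \emph{adaptive}: $d_k^2B_k(t)\leq C\, r_k d_k^3\log d_k / T_k(t)$. For the adaptive bound, the stochastic term is $8A^2\sqrt{\log d_k/N}\lesssim A^2 d_k\sqrt{\log d_k}/T_k(t)$, which is dominated by the rate since $r_k\geq 1$. The update rule replaces the estimator only when the bound decreases. So at the final time $n$, the retained estimator satisfies $\|\widehat{\bM}^k_n-\bM^k\|_F^2\leq d_k^2B_k(n)$. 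Moreover, $B_k(n)$ is at most the adaptive bound evaluated at the sample size of any earlier round of $k$. In particular, $d_k^2 B_k(n)\leq C r_kd_k^3\log d_k/T_k'$, where $T_k'$ is the size of the last fresh dataset for $k$, which is half of $T_k(n)$ up to the final, possibly truncated, round.

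The core step is the allocation argument. Write $a_k\eqdef r_kd_k^3\log d_k$ and $q\eqdef p/(p+1)$. We want to show that each $T_k(n)$ is at least a constant times the oracle allocation $T_k^\star = n\,a_k^{q}/\sum_j a_j^{q}$. Let $t$ be the last time matrix $k$ was selected and pick any matrix $j$. If $j$ was selected after time $t$, then at that selection time $\tau$ the index of $j$ beats that of $k$. Since $T_k$ and $B_k$ were frozen after $t$, this gives
\[ \frac{C a_j}{T_j(\tau)^{1+1/p}} \;\geq\; d_j^2B_j(\tau)T_j(\tau)^{-1/p}\;\geq\; d_k^2B_k(n)T_k(n)^{-1/p} \;\eqdef\; \Lambda_k . \]
Hence $T_j(\tau)\leq (Ca_j/\Lambda_k)^{q}$, and by doubling $T_j(n)\leq 2(Ca_j/\Lambda_k)^{q}$, modulo the initialization size. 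Summing over $j$ and using $\sum_j T_j(n)\approx n$ yields $\Lambda_k^{q}\lesssim \sum_j a_j^{q}/n$. The loss bound then follows by applying the same argument to each $k$. Since $\|\widehat{\bM}^k_n-\bM^k\|_F^{2}\leq d_k^2B_k(n) = \Lambda_k T_k(n)^{1/p}$, we get
\[ \cL_n^p\leq\Big(\sum_k \Lambda_k^p T_k(n)\Big)^{1/p}\lesssim \frac{\sum_j a_j^q}{n}\cdot\frac{\big(\sum_j a_j^q\big)^{1/q}}{\sum_j a_j^q}\cdot\ldots \]
Carrying the exponents through with the uniform bound $\Lambda:=\max_k\Lambda_k\lesssim(\sum_j a_j^q/n)^{1/q}$ and $\sum_kT_k(n)\leq n$ gives $\cL_n^p\leq \Lambda\, n^{1/p}\lesssim (\sum_j a_j^q)^{1/q}/n$, which is exactly the claimed bound.

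I expect the main obstacle to be this comparison step. First, $B_j$ is only upper bounded by the adaptive rate, not lower bounded. So the inequality must run in the direction that uses only upper bounds on $B_j$ and the frozen value for $k$, as arranged above. Second, there are edge effects to control. The last doubling may overshoot the budget, and matrices may never be selected after initialization; both require care, with constant-factor losses absorbed by the doubling. Finally, the initialization cost $\sum_k d_k\log d_k$ must be handled by assuming $n$ is large enough compared to it.
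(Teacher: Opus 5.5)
Your proof is correct once one missing case is filled in, and it is organized differently from the paper's.

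\textbf{The missing case.} As written, your comparison step only bounds $T_j(n)$ for matrices $j$ that are selected after $t$, the last selection time of $k$. The summation over $j$, however, also needs $j=k$ itself and every $j$ whose last selection precedes $t$. The fix uses a fact you already have. The index $d_k^2B_k(s)T_k(s)^{-1/p}$ is non-increasing in $s$, because $B_k$ never increases and $T_k$ never decreases. So $\Lambda_k$ is a lower bound on $k$'s index at \emph{every} time, not only after $t$.

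Consequently, at the last post-initialization selection $\tau_j$ of any $j$ (including $j=k$ at time $t$), you get
$\Lambda_k\le d_j^2B_j(\tau_j)T_j(\tau_j)^{-1/p}\lesssim a_j\,T_j(\tau_j)^{-1/q}$.
Hence $T_j(n)=2T_j(\tau_j)\lesssim (a_j/\Lambda_k)^q$ for all $j$, or $T_j(n)\lesssim d_j\log d_j$ if $j$ was only initialized. With this, the summation goes through.

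\textbf{Comparison with the paper.} The paper applies the pigeonhole principle once.
\begin{enumerate}
\item It picks a single matrix $m$ with $T_m(n)\gtrsim n\,c_m/\sum_k c_k$.
\item It bounds $m$'s index at its last selection $t_1$ by $(\sum_k c_k/n)^{(p+1)/p}$, using the adaptive bound from the penultimate selection $t_2$.
\item Since $m$ won at $t_1$, every $i$ satisfies $d_i^2B_i(t_1)\lesssim T_i(t_1)^{1/p}(\sum_k c_k/n)^{(p+1)/p}$.
\item It intersects this bound, which increases in $T_i(t_1)$, with the adaptive bound $a_i/T_i(t_1)$, which decreases, and maximizes the minimum over $T_i(t_1)$.
\item This yields the per-matrix bound $\|\widehat{\bM}^i_n-\bM^i\|_F^{2p}\lesssim c_i(\sum_k c_k)^p/n^p$, which is then summed.
\end{enumerate}

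Your summation over $j$ is the contrapositive of the same counting, applied to each $k$ separately. It gives directly that every final index $\Lambda_k$ is of order at most $(\sum_j a_j^q/n)^{1/q}$; the paper gets the same level for all indices at time $t_1$. You then bypass the balancing step: you write $\|\widehat{\bM}^k_n-\bM^k\|_F^{2p}\le\Lambda_k^pT_k(n)$ and use the budget constraint $\sum_kT_k(n)\lesssim n$ a second time, a H\"older-type aggregation.

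What each route buys:
\begin{itemize}
\item Yours is slicker for the aggregate loss $\cL_n^p$, but it gives no per-matrix guarantee unless you add the paper's balancing against the adaptive bound.
\item The paper's needs the index comparison at only one time, so it never requires the monotonicity-of-index observation for arbitrary $j$, and it delivers the per-matrix bounds as a by-product.
\end{itemize}
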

We prove this result in Appendix~\ref{app:proof}. It relies on a careful bounding of the estimation error of $\widehat{\bM}_n$ directly, as it is \emph{not possible}\footnote{For example, if one of the estimators of $\bM^k$ is \textit{by chance} very good  despite having been given few samples, then it is possible that it will not be given more samples.} to prove bounds on $T_k(n)$, the number of times that each arm has been sampled at the end of the horizon, as opposed to many regret analyses used for bandit settings. In particular, the proof proceeds by showing that the following bounds on the error hold with high probability. First, using the sampling criterion we prove that for all $k$ a bound  of the form 
\begin{align*}&\norm{\widehat{\bM}^k-\bM^k}_F^{2} \\ &\quad\ \leq \cO \pa{T_k(n)^\frac{1}{p} \pa{\sum_k{ (r_k d_k^3\log d_k)^\frac{p}{p+1}}}^{\!\!\!\frac{p+1}{p}} \!\!\! n^{-\frac{p+1}{p}}}.\end{align*}
Importantly, this \textit{grows} with $T_k(n).$ On the other hand, Proposition~\ref{thm:ub} yields that \[{\norm{\widehat{\bM}^k-\bM^k}_F^2 \leq \cO \pa{\frac{r_kd_k^3\log d_k}{T_k(n)}}} \CommaBin\] which \textit{decreases} with $T_k(n)$. By balancing both bounds with respect to $T_k(n)$, we get an upper bound on the estimation error that does not depend on $T_k(n)$.

This result shows that the complexity of the problem crucially depends on the interaction between both the intrinsic difficulty of each sub-problem associated with~$\bM_k$, characterized by $r_k$ and $d_k$, and the loss parameter $p$. Namely, if we set $$c_k \triangleq r_k d_k^3 \log(d_k)$$ for the \emph{complexity} of problem~$k$, and $\bc = (c_1, \dots, c_K)$, then the complexity of the active problem is $\norm{\bc}_{\frac{p}{p+1}}$ i.e., the loss is upper bounded as 
\[\cO\pa{\norm{\bc}_{\frac{p}{p+1}}n^{-1}}.\]
On the other hand, it is easy to see that the uniform strategy suffers a loss of order $\frac{K}{n}\norm{\bc}_p$, which is always larger\footnote{as we have $\norm{\bx}_{q_1} \leq K^{1/q_1 - 1/q_2} \norm{\bx}_{q_2}$ for $0 < q_1 < q_2$} than $\frac{1}{n}\norm{\bc}_{\frac{p}{p+1}}$. This shows that our active strategy, \malocate, adapts on-the-fly to the difficulty of the problem at hand, without requiring any input parameter that depends on this complexity.

We now rewrite the previous theorem for the important case $p = \infty$.
\begin{corollary}{(upper bound for max loss)}
After $n$ sample requests, {\malocate} started with loss parameter $p=\infty$ outputs $K$ estimators, such that with probability at least $1- \sum_k 16\log(d_k)/d_k,$
\[\max_{k\in [K]} \norm{\hat{\bM}^k_n - \bM^k}_F^2 \leq \cO \pa{\frac{\sum_{k=1}^K r_k d_k^3 \log d_k}{n}}\cdot \]
\end{corollary}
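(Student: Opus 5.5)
The corollary follows by specializing Theorem~\ref{thm:algoub} to $p=\infty$. I will read the loss $\cL_n^\infty$ as the limit of $\cL_n^p$ as $p\to\infty$, which is $\max_{k}\norm{\hat{\bM}^k_n-\bM^k}_F^2$. On the right-hand side, the exponents satisfy $\frac{p}{p+1}\to 1$ and $\frac{p+1}{p}\to 1$. The bound therefore becomes $\sum_k r_k d_k^3\log d_k / n$. The probability $1-\sum_k 16\log(d_k)/d_k$ does not depend on $p$, so it carries over unchanged.

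Taking a limit inside an $\cO(\cdot)$ is not rigorous unless the hidden constant is uniform in $p$. Moreover, the algorithm run with $p=\infty$ uses the criterion $\arg\max_k d_k^2B_k(t)$, which is a different procedure from any finite-$p$ run. So I would instead re-run the proof sketch of Theorem~\ref{thm:algoub} directly with $p=\infty$.

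\textbf{Step 1: favorable event.} Work on the event where, for every $k$ and every doubling stage (there are $\cO(\log n)$ of them), two things hold. First, the confidence band is honest: $\norm{\widehat{\bM}^k-\bM^k}_F^2\le d_k^2 B_k(t)$. Second, it is adaptive: $d_k^2B_k(t)\le C c_k/T_k(t)$, with $c_k=r_kd_k^3\log d_k$. These follow from Proposition~\ref{thm:ub}, the concentration bound for $\widehat R_N$, and $N\gtrsim T^2/d_k^2$. The union bound gives the stated probability, exactly as in the theorem.

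\textbf{Step 2: the max criterion.} Since each $B_k$ is non-increasing, the final errors satisfy $\max_k d_k^2B_k(n)\le \min_k d_k^2 B_k(\tau_k)$, where $\tau_k$ is the last time matrix $k$ was selected.

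To see this, fix $k$. At time $\tau_k$, arm $k$ had the largest index, so every arm's final bound is at most $d_k^2B_k(\tau_k)$. This uses two facts: each arm's index only decreases, and any arm that was picked later was picked because its index was the maximum at that later time. That maximum is itself bounded by the maximum at $\tau_k$.

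At $\tau_k$, just before doubling, arm $k$ held $T_k(n)/2$ samples. Hence
\[d_k^2B_k(\tau_k)\le \frac{2Cc_k}{T_k(n)} .\]
Combining the two facts, $L\eqdef\max_j d_j^2B_j(n)\le 2Cc_k/T_k(n)$ for every $k$. Rearranging gives $T_k(n)\le 2Cc_k/L$.

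\textbf{Step 3: budget.} Because of the doubling schedule, the budget actually consumed is at least of order $n$, up to a factor $2$ and the initialization cost $\sum_k d_k\log d_k$. The initialization cost is negligible compared with $c_k$. Summing the inequality from Step 2 over $k$ gives
\[n\lesssim \sum_k T_k(n)\le \frac{2C\sum_k c_k}{L}.\]
Therefore $L\le\cO(\sum_k c_k/n)$. By honesty, $\max_k\norm{\hat{\bM}^k_n-\bM^k}_F^2\le L$, which proves the corollary.

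The main obstacle is Step 2. I must correctly handle the last-selection time $\tau_k$ and arms that are never re-selected after initialization. For the latter, $\tau_k$ is the initialization round, where the same inequality holds with $T_k=\cO(d_k\log d_k)$. I also need the overshoot of the final doubling to cost only constant factors.
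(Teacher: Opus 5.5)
Your skeleton is right, and it is the paper's argument in contrapositive form. The maximal index $\max_j d_j^2B_j(t)$ is non-increasing. At arm $k$'s last selection it equals $d_k^2B_k(\tau_k)$, and there it is controlled by the estimator of the previous stage, which was trained on a constant fraction of $T_k(n)$. By honesty it dominates every final error. The paper uses a pigeonhole step to pick one arm $m$ with $T_m(n)\gtrsim \frac{c_m}{\sum_k c_k}\,n$, and evaluates the maximal index at $m$'s last selection. You instead bound every $T_k(n)\lesssim c_k/L$ and sum against the budget, which is equivalent. No limit in $p$ is needed: the proof of Theorem~\ref{thm:algoub} is written with $T^{1/p}$ and $2^{1/p}$, which make sense and stay bounded at $p=\infty$.

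The step that fails is your treatment of arms selected only at initialization. For such an arm, $\tau_k$ is its first selection. There $B_k(\tau_k)=+\infty$, and $T_k(\tau_k)=0$ rather than $T_k(n)/2$. So $L\le d_k^2B_k(\tau_k)$ is vacuous, and nothing gives $T_k(n)\le 2Cc_k/L$. That inequality genuinely need not hold. After initialization, such an arm has index at most of order $r_kd_k^2\asymp c_k/T_k(n)$. It is never chosen again precisely because its index stays below the running maximum. That maximum can remain well above $c_k/T_k(n)$ until the end, for instance next to a much larger or higher-rank matrix. Your remark that the initialization cost is negligible ``compared with $c_k$'' does not address this; what matters is its size relative to $n$.

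The repair is short, but it uses an assumption you never state. Let $S$ be the set of arms selected at least twice, and apply your Step~2 bound $T_k(n)\le C'c_k/L$ only for $k\in S$. Bound the other arms by their initialization budget:
\[
\sum_{k\notin S}T_k(n)\le\sum_k 2T_k^I\le 4\sum_k d_k\log d_k .
\]
Under $n\ge 48\sum_k d_k\log d_k$, the condition the paper's proof imposes, this is at most $n/12$. Hence $S\neq\emptyset$ and $\sum_{k\in S}T_k(n)\gtrsim n$, so $L\lesssim\sum_k c_k/n$, and honesty concludes. Your Step~3 cannot work without some lower bound of this type on $n$. In the paper's version, the same condition is what guarantees that the pigeonhole arm $m$ was selected at least twice.

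A smaller point concerns the probability. To recover the stated $1-\sum_k 16\log(d_k)/d_k$ from the union bound, you need $\cO(\log d_k)$ doubling stages per arm, not $\cO(\log n)$. That means working in the regime $T_k(n)\le d_k^2$, which is also where the lower bound on the number $N$ of double-sampled entries applies.
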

This result is a direct corollary of our main upper bound. It shows that interestingly, even in the case $p = \infty$, the complexity of each \textit{individual} problem comes into play. Namely, in this setting, the total complexity is simply the sum of the complexities for each sub-problem.
\begin{remark}
While our results are stated in the fixed-budget setting, our strategy can easily be adapted to the $(\epsilon, \delta)$-correct setting, by slightly modifying the estimators, in particular by replacing $\log d_k$ terms by $\log(1/\delta)$ and re-deriving the bounds on their performance. The sample complexity would be of order $\tilde{\cO}(\norm{c}_{\frac{p}{p+1}}\epsilon^{-1})$. Interestingly, in this setting, it is also possible  to  design a stopping rule, as we have adaptive confidence bands on the estimates of $\epsilon_t$, the error at round $t$.
\end{remark}
\subsection{Lower bound}
We now show a lower bound for the active multiple matrix completion problem in the case $p = \infty$. The offline 
part of our lower bound proof is inspired by~\cite{koltchinskii2011nuclear}. The challenge of our proof is the  active setting as we have to consider strategies that may actively spread their observations over the different matrices.
\begin{restatable}{thm}{restalb}\label{thm:lb_max}
For any active strategy $\cS$, there exists a problem $P = (\bM^1, \dots, \bM^K)$, where $\bM^k$ is of rank at most $r_k$ and dimension $(d_k \times d_k)$, such that after $\cS$ (actively) collects at most $n$ observations corrupted with $\cN(0, A^2)$ noise and outputs $K$ estimators $(\widehat{\bM}^1, \dots, \widehat{\bM}^K)$, we have
\[
\mathbb E_{P,\cS}\left[\max_{k \in [K]}\pa{\norm{\widehat{\bM}^k - \bM^k}_F^2}\right] \geq \frac{A^2}{2048}\frac{\sum_{k=1}^K r_k d_k^3}{n}\cdot
\]
\end{restatable}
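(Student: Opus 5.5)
The plan is to reduce the active problem to $K$ independent offline problems, each with a random budget $T_k$, via a product-of-packings construction. Then we use the fact that the max loss is controlled by whichever matrix received too few samples relative to its complexity. Write $c'_k \triangleq r_k d_k^3$ and define the target allocation $n_k \triangleq n\, c'_k/\sum_j c'_j$, so $\sum_k n_k = n$. Since the $T_k$ are random but sum to at most $n$, for every realization there is some $k$ with $T_k \leq n_k$. By Markov's inequality applied to $\sum_k T_k/n_k$, averaged with weights proportional to $n_k$ in expectation, there is an index $k^\star$ with $\mathbb{P}(T_{k^\star} \leq 2n_{k^\star}) \geq 1/2$ under a suitably chosen prior. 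To make this rigorous despite the strategy adapting to the problem, I would fix a reference problem $P_0$ (all matrices zero, say) and pick $k^\star$ such that $\mathbb{E}_{P_0}[T_{k^\star}] \leq n_{k^\star}$; this index exists because $\sum_k \mathbb{E}_{P_0}[T_k] \leq n = \sum_k n_k$. Markov then gives $\mathbb{P}_{P_0}(T_{k^\star} \leq 2n_{k^\star}) \geq 1/2$.

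Next, perturb only matrix $k^\star$, following the construction of \citet{koltchinskii2011nuclear}. Use Varshamov--Gilbert to build a packing $\{\bM_\omega\}_{\omega}$ of $\exp(c\, r_{k^\star} d_{k^\star})$ rank-$r_{k^\star}$ matrices of size $d_{k^\star}\times d_{k^\star}$. The entries are $\pm\gamma$ on an $r_{k^\star}$-row block, repeated periodically, with $\gamma^2 \asymp A^2 r_{k^\star} d_{k^\star}/n_{k^\star}$. This gives pairwise separation $\norm{\bM_\omega-\bM_{\omega'}}_F^2 \gtrsim \gamma^2 d_{k^\star}^2$ and entries bounded by $A$ once $n_{k^\star}\gtrsim r_{k^\star} d_{k^\star}$. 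All other matrices stay zero. Under Gaussian noise, each observation from matrix $k^\star$ contributes KL divergence $\norm{\bM_\omega-\bM_0}_F^2/(2A^2 d_{k^\star}^2)$, and observations from other matrices contribute nothing. The chain rule for adaptive sampling therefore gives
\[
\mathrm{KL}(\mathbb{P}_{\omega}\,\|\,\mathbb{P}_{0}) = \mathbb{E}_{P_0}[T_{k^\star}]\frac{\norm{\bM_\omega}_F^2}{2A^2 d_{k^\star}^2} \lesssim \frac{n_{k^\star}\gamma^2}{A^2} \leq \alpha\, r_{k^\star} d_{k^\star}
\]
for a small constant $\alpha$, which is much smaller than the log-cardinality of the packing. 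Fano's inequality (Tsybakov's Theorem 2.5) then yields
\[
\mathbb{E}\norm{\widehat{\bM}^{k^\star}-\bM^{k^\star}}_F^2 \gtrsim \gamma^2 d_{k^\star}^2 \asymp A^2 \frac{r_{k^\star} d_{k^\star}^3}{n_{k^\star}} = A^2\frac{\sum_k r_k d_k^3}{n}.
\]
Since the max loss dominates the $k^\star$ loss, this gives the claim. Tracking constants (packing size $2^{r d/8}$, $\gamma$ chosen with a factor around $1/16$) should produce something like $1/2048$.

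The main obstacle is the adaptivity. The index $k^\star$ must be chosen before the perturbation, so I select it via expectations under the null $P_0$, where the strategy's behaviour is fixed. The KL chain rule handles the rest, because the divergence involves only $\mathbb{E}_{P_0}[T_{k^\star}]$ rather than its distribution under the alternatives. This is why I would use the expected-count form directly and not need the Markov step on $T_{k^\star}$.

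A secondary issue is ensuring $\gamma \leq A$, i.e. $n \gtrsim \sum_k r_k d_k^3/(d_{k^\star}^2)$. If that fails, the bound is trivially at least order $A^2 d_{k^\star}^2$ by using a two-point argument with $\gamma=A$, and the stated inequality still holds up to the constant. I would treat this case separately, or assume $n$ is large enough, as is implicit in the statement.
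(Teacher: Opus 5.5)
\textbf{There is a genuine gap.} Your route is the same as the paper's appendix: pick $k^\star$ by pigeonhole on expected counts under the all-zero problem, perturb only matrix $k^\star$ with a Varshamov--Gilbert packing, apply the adaptive KL chain rule, then use Tsybakov's Theorem~2.5. The step you single out as handling adaptivity is false, though. For adaptively collected data the chain rule gives $\mathrm{KL}(\mathbb{P}_\omega\,\|\,\mathbb{P}_0)=\mathbb{E}_{\mathbb{P}_\omega}[T_{k^\star}]\,\norm{\bM_\omega}_F^2/(2A^2d_{k^\star}^2)$: the expected count is taken under the \emph{first} argument. The expression with $\mathbb{E}_{P_0}[T_{k^\star}]$ is $\mathrm{KL}(\mathbb{P}_0\,\|\,\mathbb{P}_\omega)$. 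Fano's inequality and Tsybakov's Theorem~2.5 need $\frac{1}{M}\sum_\omega \mathrm{KL}(\mathbb{P}_\omega\,\|\,\mathbb{P}_0)$. The paper writes $\mathrm{KL}(\mathbb{P}^n_{0,\cS},\mathbb{P}^n_{i,\cS})$ with $\mathbb{E}_{0,\cS}$, so its written proof has the same problem and does not resolve this point.

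This is not a technicality. With every other matrix pinned at zero, the family is easy for an adaptive learner. Consider a strategy that first spends, on each matrix, a number of samples negligible compared with $n_k$, just enough to test whether that matrix is nonzero. For $\{0,\gamma\}$-valued packings a mean test needs order $n_k/(r_kd_k)$ samples. For your $\pm\gamma$ packing a second-moment test against the known noise level $A^2$ needs order $(n_k/(r_kd_k))^2$. The strategy then spends essentially the whole budget on the flagged matrix. In the symmetric case $r_k=r$, $d_k=d$, this strategy has $\mathbb{E}_{P_0}[T_k]\approx n/K$ for every $k$, so your selection rule accepts any index as $k^\star$. Yet on every $\mathbb{P}_\omega$ it estimates $\bM^{k^\star}$ from about $n$ samples and gets error of order $A^2rd^3\log d/n$, which is far below $A^2Krd^3/n$ once $K\gg\log d$. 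So no choice of KL direction can rescue the one-matrix-at-a-time family; it can only yield a bound of order $\max_k A^2r_kd_k^3/n$.

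\textbf{The fix} is the product-of-packings construction you announce in your first sentence and then abandon: perturb \emph{all} matrices at once.
\begin{itemize}
\item Draw $\omega_1,\dots,\omega_K$ independently and uniformly from packings whose amplitudes $\gamma_k^2\asymp A^2r_kd_k/n_k$ are fixed in advance by the target allocation.
\item Let $\bar{\mathbb{P}}$ be the joint law of parameters and data under $\cS$. This single law does not depend on $k$, so $\sum_k\mathbb{E}_{\bar{\mathbb{P}}}[T_k]\le n$ gives $k^\star$ with $\mathbb{E}_{\bar{\mathbb{P}}}[T_{k^\star}]\le n_{k^\star}$. The learner cannot locate a ``hard'' matrix, because every matrix is hard.
\item The prior is a product and, given the data, the likelihood factorizes over matrices, so the $\omega_k$ remain conditionally independent given the past. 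Hence an observation from matrix $j\neq k^\star$ carries no information about $\omega_{k^\star}$.
\item The chain rule then gives $I(\omega_{k^\star};\text{data})\le \mathbb{E}_{\bar{\mathbb{P}}}[T_{k^\star}]\max_\omega\norm{\bM_\omega}_F^2/(2A^2d_{k^\star}^2)\lesssim r_{k^\star}d_{k^\star}$, now with the expectation under the right law. Equivalently, apply joint convexity of KL to the mixtures over the other matrices and then use Tsybakov's Theorem~2.5 on those mixtures.
\item Fano's inequality with mutual information gives a Bayes lower bound of order $A^2r_{k^\star}d_{k^\star}^3/n_{k^\star}=A^2\sum_kr_kd_k^3/n$ on $\mathbb{E}\norm{\widehat{\bM}^{k^\star}-\bM^{k^\star}}_F^2$. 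Hence the bound holds for the max loss on some problem in the class, with constants to be re-tracked.
\end{itemize}
Finally, your side condition $\gamma\le A$ is not needed for the statement as written, which imposes no entrywise bound in the Gaussian model.
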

We prove this theorem in Appendix~\ref{s:proof_lb}. The main argument is that for any active strategy $\cS$, for any fixed problem $P$, there exists one index $m \in [K]$ such that \[\mathbb E_{P,\cS}\left[T_k(n)\right] \leq \frac{r_md_m^3}{\sum_k r_k d_k^3}n.\]
Then, we carefully adapt  the arguments of the lower bound for $K=1$ to our active setting.

This shows that our active strategy is minimax-optimal (up to logarithmic factors) over the class of problems with dimension $\{d_k\}_{k \in [K]}$ and ranks at most $\{r_k \}_{k \in [K]}$, fully adaptive to the unknown ranks of the sub-problems. Importantly, the lower bound also holds for strategies that have a priori knowledge of $\{r_k \}_{k \in [K]}$.

\begin{remark}\label{rem:anyestimator}
Notice, that while Algorithm~\ref{alg:est} uses a particular square-root lasso estimator with associated guarantees, 
our approach straightforwardly extends to other estimators. 
For example, \citet{klopp2015matrix}
provides sharp bounds in the Bernoulli model, i.e., without the extra $\log d_k$ factor. Therefore, 
this or any other result, that provides a sharper estimator
could be used instead in Algorithm~\ref{alg:est}. This would improve the overall complexity of our active strategy by removing the extraneous $\log d_k$ factors in the complexity, matching exactly the lower bound for $p = \infty$. 
\end{remark}

\section{Synthetic experiments}
\label{s:exp}
We now support our analysis \malocate with synthetic experiments. To create a square matrix of rank $r$ and dimension $d$, we generate two matrices $\bU \in \mathbb R^{d \times r}$ and $\bV \in \mathbb R^{r \times d}$ with entries distributed as $\cN(0, \sigma_r^2 \triangleq r^{-1/2})$. The standard deviation $\sigma_r$ is chosen such that the entries of $\bM = \bU \bV$ have the same scaling, regardless of the rank of the matrix. Observations are corrupted with Gaussian white noise $\cN(0, \sigma \triangleq 0.1)$. We consider both objectives $\cL_p$ for $p=1$ and $p=\infty$, on which we run \malocate also with both parameters $p=1$ and $p=\infty$. We also compare \malocate to the na\"ive uniform strategy, and for the max loss also with the oracle strategy that has access to the true Frobenius error of the estimators and allocates the next samples to the index $\arg\max_k \|\widehat{\bM}^k_t - \bM\|_F^2$. Note that this strategy (for a fixed estimation procedure) is optimal for $p = \infty$, as the max loss may only decrease if the worst estimator is improved. 

As our goal is to study the active advantage of \malocate, all the strategies have access to the same estimator $\texttt{SoftImpute}$, tuned with the same parameters. Moreover, we  discretize time in a similar fashion for all the strategies: The initialization phase of each estimator is done with $8d_k$ samples and after that, the budget is divided evenly in approximately $100$ sub-samples. This allows to bypass the negative effects associated with a doubling schedule. As our strategy is naturally anytime, we plot the results as the time horizon grows from the initialization up to $n = Kd^2/2$. At each round $t$ where a new estimator has been trained, we use the knowledge of $\bM^k$ to compute $\cL^p_t$ for $p \in \{1, \infty\}$. For both experiments, we draw and fix the problem, and average the results over $15$ runs.

\paragraph{First experiment}  We fix $d_k \triangleq d \triangleq 200$, $K \triangleq 10$, and the ranks are such that $r_k \triangleq 10$ for all $k$ besides $r_1 = 40$. We choose this instance as it forces the strategy into a tradeoff with respect to the loss parameter~$p$. Heuristically, to optimize the sum loss ($p=1$), reaching a good error on each of the easy problems is very important. On the other hand, to optimize the max loss, it is necessary to spend a large portion of the budget on the hardest instance. 
In Figure~\ref{f:f}, we see  that our strategies perform favorably in the setting they are  designed for. We also see that the uniform strategy only catches up when the number of samples is high enough such that the careful sample allocation has little effect on the performance.
\begin{figure}[t]
\centering
\subfigure{
\label{fig:first}
\includegraphics[height=2.4in]{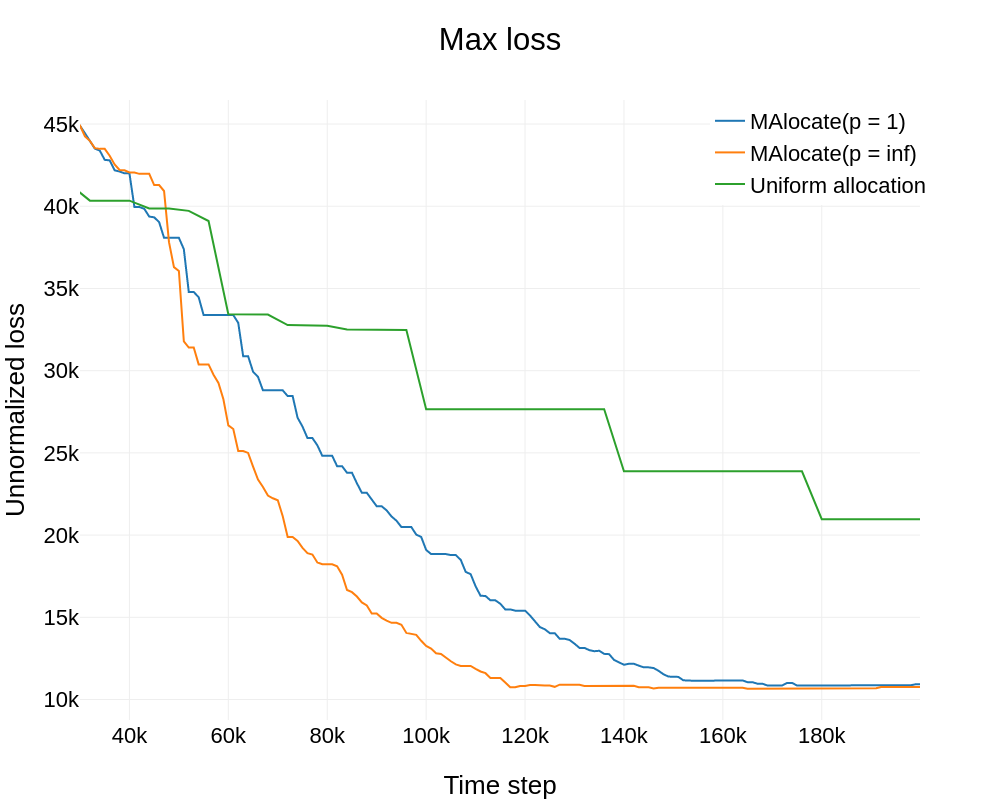}}
\qquad
\subfigure{
\label{fig:second}
\includegraphics[height=2.4in]{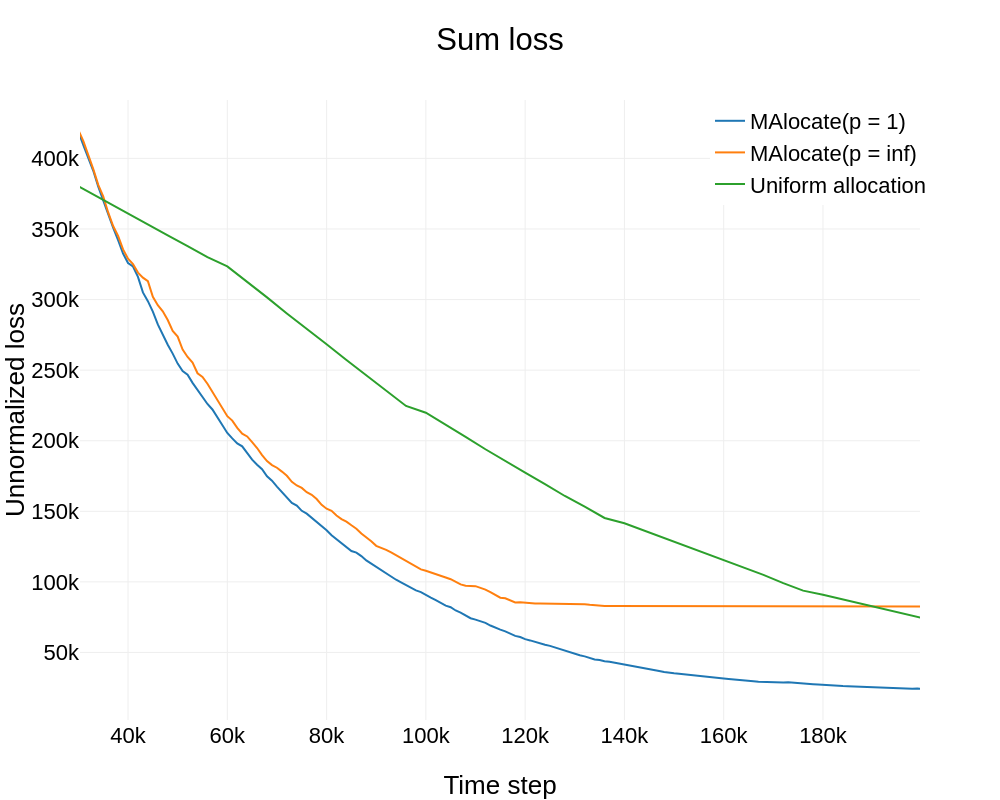}}
\caption{Results for the first experiment}
\label{f:f}
\end{figure}
\paragraph{Second experiment} We fix $d_k \triangleq d \triangleq 200$ and $K \triangleq 15$. The ranks $r_k$ are given by $r_k \triangleq 18 + 0.0015 k^4$. Note that the hardest instance is such that $r_{15} = 76$ and half of the sub-problems have rank at most $22$. This set of problems is more varied than the previous one and shows the adaptivity of our strategy (Figure~\ref{f:s}).

\begin{figure}[th]
\centering
\subfigure{
\label{fig:first_exp2}
\includegraphics[height=2.4in]{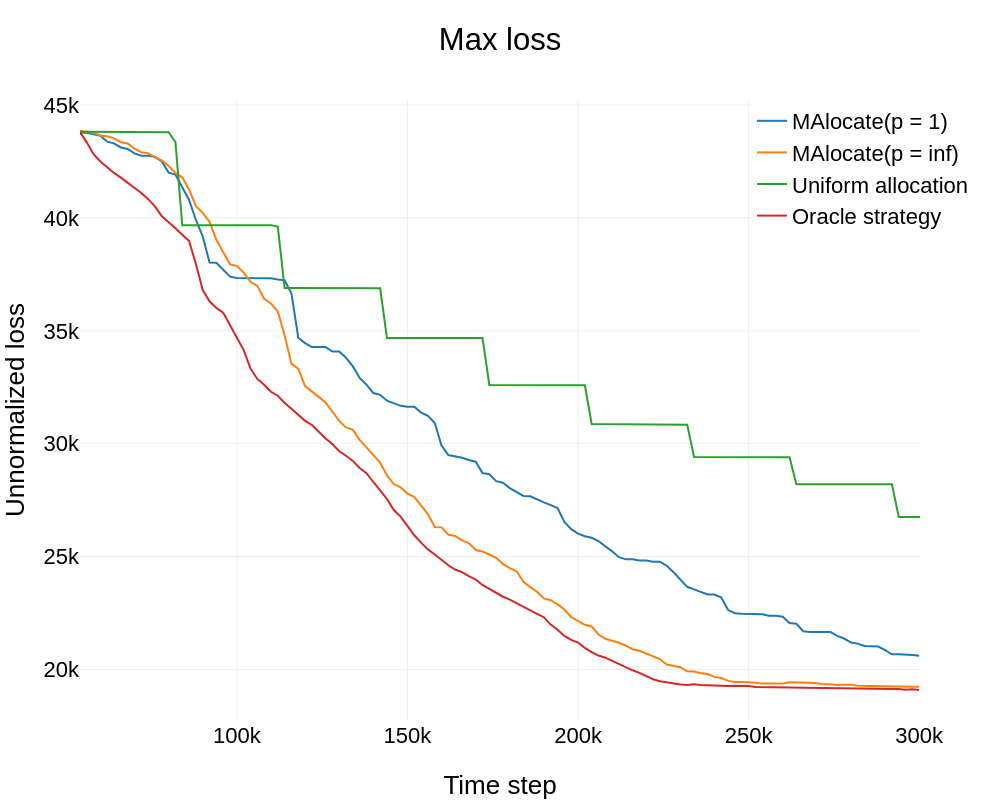}}
\qquad
\subfigure{
\label{fig:second_exp2}
\includegraphics[height=2.4in]{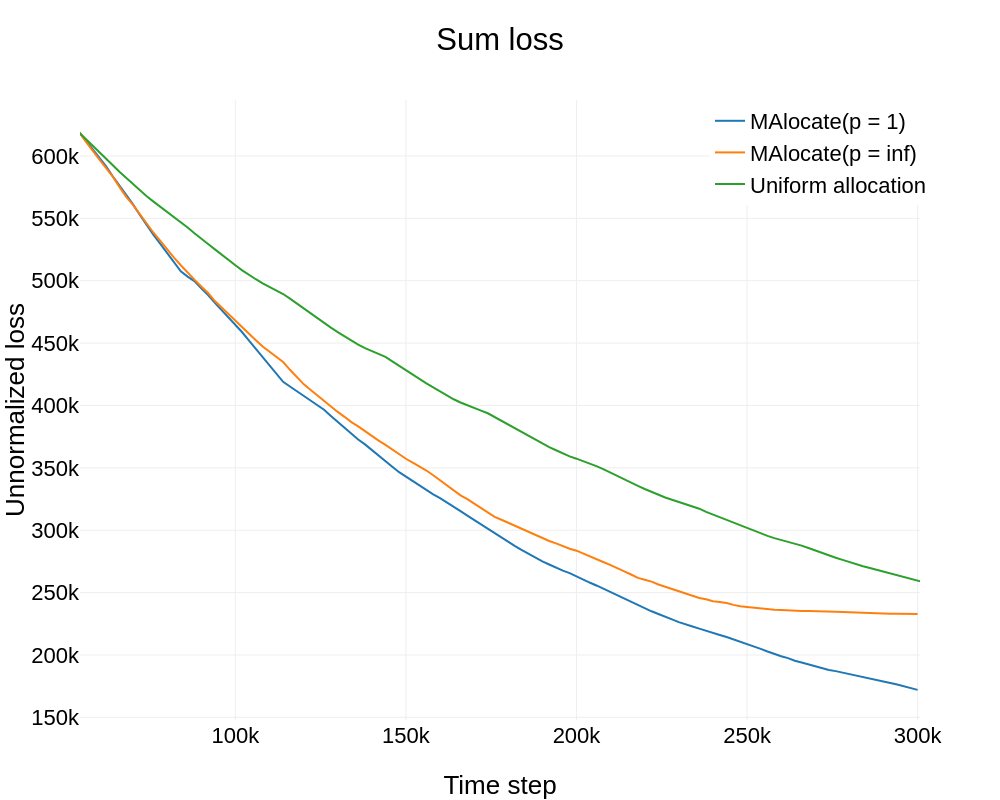}}
\caption{Results for the second experiment}
\label{f:s}
\end{figure}

\paragraph{Implementation of \malocate} 
As we discuss in {Remark~\ref{rem:anyestimator}}, our generic strategy can be used for \textit{any estimator}, which may be chosen appropriately with respect to the exact noise setting.
For performance reasons, we used the \texttt{SoftImpute}  estimator \citep{mazumder2010spectral} from the python package \texttt{fancyimpute}, which we tweak to have a warm-start heuristic that fills missing entries with the previous estimator $\widehat{\bM}^k$. This allows us to speed-up the running time. More generally, online matrix completion results such as the ones by~\cite{dhanjal2014online,lois2015online,jin2016provable} fit in our active and sequential framework. We tune the confidence intervals on the error in a conservative way. As we use a time discretization instead of a geometric grid, we also re-use samples throughout the run. Finally, as explained in Section~\ref{s:alg}, instead of splitting the entire sample, we use entries that have been observed once to train the estimator, and the other entries (sampled at least twice) to estimate the error.

Across the experiments, we see that \malocate run with the proper loss parameter $p$ indeed performs better on the associated loss $\cL^p$. For the max loss, we also see that \malocate with $p=\infty$ performs only slightly worse than the optimal oracle strategy in this setting. On the other hand, the uniform strategy performs poorly across the problems. We see that for the max loss, the loss peters out when the hardest matrix to estimate has been sampled $d_k^2$ times, as we cap the number of observations for each matrix to $d_k^2$. We remark however that we are interested in settings with smaller~$n \ll Kd_k^2$, where we see that \malocate with $p=\infty$ performs very favorably.

\section{Conclusion and discussion}
 We presented a new active matrix completion setting and provided \malocate, an active strategy that is able to adapt to the different 
complexities of the problems and proved that up to 
log factors, it achieves minimax-optimal guarantees. We also showed that empirically, it performs in accordance with its theoretical guarantees for two loss settings.
We see our work as the first step towards a more systematic understanding of the links between adaptive confidence sets (in any statistical setup) and the corresponding active learning setting.

We considered the \textit{high-dimensional} regime where the number of samples $n$ satisfies $d \leq n \ll d^2$. The number of doubly-sampled entries scales (w.h.p.) by Proposition~\ref{prop:alexlbtwice} as $n^2/d^2$ for any $n$ in this interval. This remains true for $n \gg d^2$ and generally our results would also hold in this regime. However, we
do not address this case here at all, as from an algorithmic point of view,  much simpler estimation strategies solve this problem, for example, least squares with a projection on the set of rank $r$ matrices coupled with Lepski's method to adapt to the rank.

Finally it is, unfortunately, not possible to extend our approach to datasets where entries are not observed twice, because it is \textit{provably impossible} to obtain a good estimator of the error.
\paragraph{Acknowledgements} 
The research presented was supported by European CHIST-ERA project DELTA, French Ministry of Higher Education and Research, Nord-Pas-de-Calais Regional Council, Inria and Otto-von-Guericke-Universit\"at Magdeburg associated-team north-European project Allocate, and French National Research Agency projects ExTra-Learn (n.ANR-14-CE24-0010-01) and BoB (n.ANR-16-CE23-0003). The work of A.\,Carpentier is also partially supported by the Deutsche Forschungsgemeinschaft (DFG) Emmy Noether grant MuSyAD (CA 1488/1-1), by the DFG - 314838170, GRK 2297 MathCoRe, by the DFG GRK 2433 DAEDALUS, by the DFG CRC 1294 Data Assimilation, Project A03, and by the UFA-DFH through the French-German Doktorandenkolleg CDFA 01-18.
This research has also benefited from the support of the FMJH Program PGMO and from the support to this program from Criteo.
\vfill

\bibliography{library}
\onecolumn
\appendix

\section{Upper bound for \malocate}\label{app:proof}
As explained in Section~\ref{s:setting}, in order to simplify the analysis, we only consider square matrices of dimension $d_k$ or $d$ below when we restate results for $K = 1$.

\begin{prop}[bound on estimation error, \citealp{klopp2012noisy}]
 \label{prop:olgalasso}
Consider the estimation problem in Frobenius norm for a matrix $\bM$ of rank $r$ with $n$ observations in the trace-regression model. $\bM$ is such that its entries, as well as the noisy observations of its entries are bounded by some (known) constant~$A$. Then, there exist numerical constants $c$ and $C$ such that the square root matrix lasso estimator $\widehat{\bM}_n$ satisfies with probability at least $1-3/d-2\exp(-cn)$
\[
\frac{\|\widehat{\bM}_n - \bM\|_F^2}{d^2} \leq CA^2 \cdot \frac{rd\log d}{n}\CommaBin
\]
where $\widehat{\bM}_n$ is defined as the solution to the following minimization problem,
\[
\widehat{\bM}_n \eqdef \argmin_{\norm{\bM}_\infty \leq A}\left\{\sqrt{\frac{1}{n}\sum_{i=1}^n \left(Y_i - \inner{\bX_i}{\bM}\right)^2} + \lambda \norm{\bM}_* \right\},
\]
with $\lambda \eqdef C'\sqrt{\log(d)/(dn)}$ and $C'$ is a numerical constant.
\end{prop}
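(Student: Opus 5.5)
This proposition is a restatement of Proposition~\ref{thm:ub} of \citet{klopp2012noisy}, now with the explicit constraint $\norm{\bM}_\infty\le A$ in the optimization, so I would not reprove the matrix-lasso theory. Instead I would verify that the hypotheses of Klopp's result are met in our setting and that adding the box constraint does not break her argument. The plan follows the standard three-step template for nuclear-norm penalized trace regression.

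\textbf{Step 1: the deterministic inequality.} Write $\Delta=\widehat{\bM}_n-\bM$. Because $\bM$ is feasible ($\norm{\bM}_\infty\le A$), optimality of $\widehat{\bM}_n$ compares the objective at $\widehat{\bM}_n$ and at $\bM$. Differentiating the square-root loss at $\bM$ gives the stochastic term $\Sigma=\frac{1}{n}\sum_i\epsilon_i\bX_i$, normalized by the empirical noise level $\big(\frac1n\sum_i\epsilon_i^2\big)^{1/2}$. Choose $\lambda$ so that $\lambda\ge 2\norm{\Sigma}_{op}/(\frac1n\sum\epsilon_i^2)^{1/2}$. Then the usual decomposability argument, splitting $\Delta$ into its projection onto the row and column spaces of $\bM$ and the complement, gives two conclusions: $\norm{\Delta}_\star\le 4\sqrt{2r}\norm{\Delta}_F$ (a cone condition), and $\frac1n\sum_i\inner{\bX_i}{\Delta}^2\lesssim \lambda\sqrt r\norm{\Delta}_F\cdot\sigma$.

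\textbf{Step 2: probabilistic control of $\lambda$.} Bound $\norm{\Sigma}_{op}$ by a matrix Bernstein inequality. Each summand has operator norm at most $A$, and its variance proxy is of order $A^2/(nd)$. This yields $\norm{\Sigma}_{op}\lesssim A\sqrt{\log d/(nd)}$ with probability $1-1/d$, provided $n\gtrsim d\log d$. In the regime where this fails, the trivial bound $\norm{\Delta}_F^2/d^2\le 4A^2$ already implies the claim for a suitable $C$. Bounded noise also gives $\frac1n\sum\epsilon_i^2\ge \sigma^2/2$ with probability $1-\exp(-cn)$ by Hoeffding. This is what makes the choice of $\lambda$ pivotal, i.e.\ independent of $\sigma$.

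\textbf{Step 3: restricted strong convexity.} This is the main obstacle: we must pass from the empirical norm $\frac1n\sum\inner{\bX_i}{\Delta}^2$ to $\norm{\Delta}_F^2/d^2$ uniformly over the cone $\{\norm{\Delta}_\star\le c\sqrt r\norm{\Delta}_F,\ \norm{\Delta}_\infty\le 2A\}$. Here the box constraint is exactly what is needed. I would follow Klopp's peeling argument: on dyadic shells of $\norm{\Delta}_F^2/d^2$, use Talagrand/Bousquet concentration together with symmetrization and contraction. This reduces the supremum of the empirical process to $\mathbb E\norm{\frac1n\sum\eta_i\bX_i}_{op}\lesssim\sqrt{\log d/(nd)}$. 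The result is
\[
\frac1n\sum_i\inner{\bX_i}{\Delta}^2\ \ge\ \frac{\norm{\Delta}_F^2}{2d^2}-C A^2\frac{rd\log d}{n},
\]
with failure probability $\le 2/d$.

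\textbf{Step 4: conclusion.} Combining Steps 1--3 gives an inequality of the form $\norm{\Delta}_F^2/d^2\lesssim A\sqrt{r\log d/(nd)}\,\norm{\Delta}_F+A^2 rd\log d/n$. Solving this quadratic inequality in $\norm{\Delta}_F$ gives the stated bound. A union bound over the failure events gives total failure probability at most $3/d+2\exp(-cn)$.
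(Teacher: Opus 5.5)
Your template (cone condition from decomposability, a matrix Bernstein bound on the noise term, Klopp's peeling RSC over the sup-norm-bounded cone, then a quadratic inequality) is the argument of \citet{klopp2012noisy}. The paper simply imports that result without proof. The constraint $\norm{\bM}_\infty\le A$ is already part of Klopp's estimator, so nothing needs adapting there.

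The genuine gap is in Step~2, exactly where the square-root lasso is supposed to earn its pivotality. Write $\hat{\sigma}_0=(\frac1n\sum_i\epsilon_i^2)^{1/2}$ for the empirical noise level. Step~1 needs $\lambda=C'\sqrt{\log d/(dn)}\ge 2\norm{\Sigma}_{op}/\hat{\sigma}_0$ with $C'$ numerical. Your two bounds only give $2\norm{\Sigma}_{op}/\hat{\sigma}_0\lesssim (A/\sigma)\sqrt{\log d/(nd)}$, so $C'$ would have to grow like $A/\sigma$. That ratio is unknown, and the hypotheses ($|Y_i|\le A$, $\norm{\bM}_\infty\le A$) say nothing about it. Likewise, Hoeffding with range $4A^2$ gives failure probability $\exp(-cn\sigma^4/A^4)$, not $\exp(-cn)$.

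This is not mere bookkeeping. Suppose $\norm{\bM}_\infty\le A/2$ and the noise equals $\pm A/2$ with small probability $p$, $1\ll pn\ll d/\log d$, and $0$ otherwise. Then with high probability $\norm{\Sigma}_{op}\gtrsim A/n$ while $\hat{\sigma}_0\approx A\sqrt{p}/2$. Hence $\norm{\Sigma}_{op}/\hat{\sigma}_0\gtrsim 1/(n\sqrt{p})\gg\sqrt{\log d/(nd)}$. The event required in Step~1 fails, and with it the cone condition on which Step~3 rests.

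To close the gap you must carry $\sigma$, not $A$, through the stochastic bounds. This requires the \emph{standardized} noise to be controlled by numerical constants. In the unit-variance model of Section~2, read ``bounded noise'' as $|\epsilon_i|\le b\sigma$ with $b$ numerical (here $\epsilon_i$ is the full noise, as in your notation), or assume $\sigma\ge c_0A$. Then:
\begin{itemize}
\item matrix Bernstein with variance proxy $\sigma^2/(nd)$ and range $b\sigma$ gives $\norm{\Sigma}_{op}\lesssim\sigma\sqrt{\log d/(nd)}$ once $n\gtrsim b^2 d\log d$;
\item Hoeffding gives $\hat{\sigma}_0\ge\sigma/\sqrt{2}$ with probability $1-\exp(-cn)$, where $c$ depends only on $b$;
\item the ratio $2\norm{\Sigma}_{op}/\hat{\sigma}_0$ is therefore $\lesssim\sqrt{\log d/(nd)}$, free of both $\sigma$ and $A$.
\end{itemize}
With that in place, Steps~3--4 go through. $A^2$ enters the rate only through $\hat{\sigma}_0\le 2A$ and the RSC remainder.

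A smaller point concerns Step~1. For the square-root loss $L$, the identity
$\frac1n\sum_i\inner{\bX_i}{\Delta}^2=2\inner{\Sigma}{\Delta}+(L(\widehat{\bM}_n)-L(\bM))(L(\widehat{\bM}_n)+L(\bM))$
also produces a term of order $\lambda^2 r\norm{\Delta}_F^2$. The RSC term absorbs it only when $n\gtrsim rd\log d$. Your trivial regime should therefore be $n\lesssim rd\log d$ rather than $n\lesssim d\log d$, which is equally harmless.
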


\begin{prop}[concentration bound for $\widehat{R_N}$ estimator, \citealp{carpentier2017adaptive}] 
\label{prop:alexconcentration}
Let $\widehat{R_N}$ be the estimation procedure (sub-procedure {\esterr}) of $\|\widehat{\bM}-\bM\|_F^2$, for some $\widehat{\bM}$. Then, with probability at least $1-\frac{2}{d}\CommaBin$ we have
\[
\left|\widehat{R_N} - \frac{||\widehat{\bM}-\bM||_F^2}{d^2}\right| \leq 8A^2\sqrt{\frac{\log d }{N}}\cdot
\]
\end{prop}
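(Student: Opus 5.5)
Proposition~\ref{prop:alexconcentration} is the concentration bound for the double-sample estimator. The plan is to condition on the set of double-sampled pairs and then apply a bounded-difference (Hoeffding) inequality to the empirical average $\widehat{R_N}$.

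\textbf{Conditioning and independence.} Because $\widehat{\bM}$ is computed from the first half of the sample, it is independent of the second half. Conditionally on $\widehat{\bM}$, consider the $N$ triplets $(X_i,Y_i,Y_i')$ built by \esterr. We pair occurrences disjointly, for instance using the first two occurrences of each entry, so that distinct triplets involve distinct noise variables. We condition on the positions of the second-half sample, which fixes $N$ and the locations $X_i$. Under uniform sampling, the locations of the distinct doubly-sampled entries are then exchangeable and each is marginally uniform on $[d]^2$. The noises $\epsilon,\epsilon'$ inside a pair are independent and centered. Writing $\Delta=\bM-\widehat{\bM}$, each term therefore satisfies
\[
\mathbb E\big[(Y_i-\inner{X_i}{\widehat{\bM}})(Y'_i-\inner{X_i}{\widehat{\bM}})\,\big|\,X_i\big]=\inner{X_i}{\Delta}^2 .
\]
Averaging over a uniform $X_i$ gives $\norm{\Delta}_F^2/d^2$, which is the claimed unbiasedness.

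\textbf{Concentration.} Each summand is bounded: $|Y_i|\le A$ and $\norm{\widehat{\bM}}_\infty\le A$ (this is enforced by the constraint in \getest), so each factor lies in $[-2A,2A]$ and each product lies in $[-4A^2,4A^2]$. I would split the deviation into two parts:
\[
\widehat{R_N}-\frac{\norm{\Delta}_F^2}{d^2}=\frac1N\sum_i\Big(Z_i-\inner{X_i}{\Delta}^2\Big)+\frac1N\sum_i\Big(\inner{X_i}{\Delta}^2-\frac{\norm{\Delta}_F^2}{d^2}\Big).
\]
Here $Z_i$ denotes the $i$-th product. In the first sum the terms are conditionally independent given the locations, centered, and bounded in a range of length $8A^2$; Hoeffding's inequality bounds it by $4A^2\sqrt{2\log d/N}$ with probability at least $1-1/d$ after choosing the deviation level $\sqrt{\log d}$ appropriately. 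The second sum is an average of terms in $[0,4A^2]$ over distinct uniformly drawn cells. Sampling without replacement is no worse than sampling with replacement (Hoeffding's reduction), so the same inequality gives a deviation of at most $4A^2\sqrt{\log d/(2N)}$ with probability at least $1-1/d$. Summing the two deviations and taking a union bound yields a total of at most $8A^2\sqrt{\log d/N}$ with probability at least $1-2/d$. The constants fit with room to spare: $4\sqrt2+2\sqrt2<8$.

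\textbf{Main obstacle.} The delicate step is justifying the independence structure after selecting the doubly-observed entries. The selection event depends on the locations, so I must argue that, conditionally on the multiset pattern of collisions, the set of collided cells is a uniformly random subset of cells. This holds by symmetry of the uniform distribution under permutations of $[d]^2$. Entries observed three or more times are handled by keeping only two observations, or by discarding such entries. Once this symmetry argument is in place, the rest is the two routine Hoeffding bounds above.
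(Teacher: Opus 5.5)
The paper does not prove this proposition (it is imported from \citet{carpentier2017adaptive}), so your argument has to stand on its own. Its architecture is sound: $\widehat{\bM}$ is independent of the second half, each pair is conditionally unbiased, the noise term is controlled given the locations, and the location term is controlled via Hoeffding's comparison between sampling with and without replacement. The gap is in the final bookkeeping, which does not deliver the stated constants, and cannot with the ranges you use.

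Two-sided Hoeffding for summands of range $8A^2$ gives $\mathbb P(|\cdot|\ge t)\le 2\exp(-Nt^2/(32A^4))$. At $t=4\sqrt2A^2\sqrt{\log d/N}$ this equals $2/d$, not $1/d$. Likewise the location term (range $4A^2$) at level $2\sqrt2A^2\sqrt{\log d/N}$ fails with probability $2/d$. Moreover $4\sqrt2+2\sqrt2=6\sqrt2=\sqrt{72}>8$, so the claimed ``room to spare'' is an arithmetic error. What you have actually proved is a deviation of $6\sqrt2A^2\sqrt{\log d/N}$ with probability $1-4/d$. Redistributing the budget does not help: keeping the total failure probability below $2/d$ forces the two levels to exceed $4\sqrt2$ and $2\sqrt2$ (in units of $A^2\sqrt{\log d/N}$), and their sum is already above $8$.

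Either of two small changes repairs it.

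(i) Conditionally on $X_i=c$, both factors $Y_i-\widehat{\bM}_c$ and $Y'_i-\widehat{\bM}_c$ lie in the same interval $[-A-\widehat{\bM}_c,\,A-\widehat{\bM}_c]$, which has length $2A$ and contains $0$. Their product therefore ranges over an interval of length $2A^2+2A|\widehat{\bM}_c|\le 4A^2$. With range $4A^2$ in both sums, levels $4A^2\sqrt{\log d/N}$ each fail with probability $2/d^2$, and $4/d^2\le 2/d$.

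(ii) Do not split. Given the locations $c_1,\dots,c_N$, the conditional moment generating function factorizes as $\prod_i\psi(c_i)$, with $\psi(c)=\mathbb E[e^{\lambda(Z_i-\mu)}\mid X_i=c]$ and $\mu=\norm{\widehat{\bM}-\bM}_F^2/d^2$. Hoeffding's convex-ordering result, applied to $\exp$ of $\sum_i\log\psi(c_i)$, bounds its expectation by the i.i.d.-uniform case. A single Hoeffding bound with range $8A^2$ at level $8A^2\sqrt{\log d/N}$ then fails with probability $2/d^2$.

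Finally, tidy the conditioning. Your first paragraph conditions on all second-half positions, which makes the $X_i$ deterministic and leaves nothing random in the location term. The noise term should be bounded given all positions, and the location term given only the collision pattern, as your last paragraph correctly says. Both bounds then integrate to unconditional ones and the union bound applies.
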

\begin{prop}[Lower bound on the number of the entries sampled twice, \citealp{carpentier2017adaptive}]\label{prop:alexlbtwice}
For $n \leq d^2$, we have with probability at least ${1-\exp(-n^2/(372d^2))}$ that the number of entries sampled twice in a dataset of size $n/2$ is at least
\[N \geq \frac{n^2}{64d^2}\cdot\]
\end{prop}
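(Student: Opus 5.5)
The plan is to reduce the count of doubly sampled entries to a sum of sequential indicators and then apply a Bernstein/Freedman-type martingale inequality. A plain bounded-differences (McDiarmid) argument is not enough here. Resampling one observation changes the count by at most $1$, but that only gives a tail of order $\exp(-c\,n^3/d^4)$, which is weaker than the claimed $\exp(-c\,n^2/d^2)$ when $n\le d^2$. The variance of the count must therefore be exploited: it is of the same order as its mean, $n^2/d^2$.

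\textbf{Setup.} Let $m=n/2$ and let $X_1,\dots,X_m$ be i.i.d.\ uniform on the $d^2$ cells. Let $\mathcal F_i=\sigma(X_1,\dots,X_i)$. Let $S_{i}$ be the number of cells visited exactly once among the first $i$ draws, and $N_i$ the number of cells visited at least twice. Set $Z_i=\mathbf 1\{X_i \text{ lands in a cell visited exactly once before}\}$. Then $N_m=\sum_{i\le m} Z_i$, because each cell visited at least twice is counted exactly once, at its second visit. The conditional mean is $p_i\eqdef \mathbb E[Z_i\mid\mathcal F_{i-1}]=S_{i-1}/d^2$. We also have $S_{i-1}\ge (i-1)-2N_{i-1}$, since at most $2N_{i-1}$ of the first $i-1$ draws can fall in cells already sampled at least twice.

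\textbf{Localization on the bad event.} Let $\tau=n^2/(64d^2)$ be the target, and suppose that $N_m<\tau$. Then $N_{i-1}<\tau$ for every $i$. Hence $\sum_i p_i\ge \frac{1}{d^2}\sum_{i\le m}(i-1-2\tau)\ge \frac{m^2}{2d^2}-\frac{2m\tau}{d^2}-\frac{m}{2d^2}$. Using $m=n/2$ and $n\le d^2$ (so $\tau\le m/32$), this compensator is at least roughly $\frac{n^2}{8d^2}(1-\frac18)-O(n/d^2)$. The $O(n/d^2)$ term can be absorbed since $n/d^2\le 1$; the small-$n$ regime, where the bound is trivial or the constants absorb it, is handled separately. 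So on the bad event the predictable compensator $A_m=\sum_i p_i$ exceeds $\tau$ by a margin $t\gtrsim n^2/(16d^2)$, while $N_m<\tau$. This forces the martingale $\sum_i (Z_i-p_i)$ to be $\le -t$. Its predictable quadratic variation is $V_m=\sum_i p_i(1-p_i)\le \sum_i p_i\le \sum_i (i-1)/d^2\le m^2/(2d^2)=n^2/(8d^2)$.

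\textbf{Concentration and the main obstacle.} Apply Freedman's inequality to the martingale with increments bounded by $1$ and $V_m\le v\eqdef n^2/(8d^2)$ deterministically:
\[
\mathbb P\Big(\textstyle\sum_i (Z_i-p_i)\le -t\Big)\le \exp\!\Big(-\frac{t^2}{2(v+t/3)}\Big).
\]
With $t$ and $v$ both of order $n^2/d^2$, this is $\exp(-c\,n^2/d^2)$. The main obstacle is the constant bookkeeping needed to land exactly at $64$ and $372$. Plugging $t\approx n^2/(18d^2)$ and $v=n^2/(8d^2)$ gives an exponent of about $n^2/(372d^2)$, which I expect matches up to the lower-order $n/d^2$ corrections. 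If Freedman's constants prove too tight, I would instead first lower bound $\mathbb E[N_m]\ge\binom m2 d^{-2}(1-d^{-2})^{m-2}\cdot c$. I would then use a self-bounding / negative-association Chernoff bound for the occupancy counts, which also yields a tail of order $\exp(-c\,\mathbb E N_m)$.
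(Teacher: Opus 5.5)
There is a genuine gap at the localization step: the inequality $S_{i-1}\ge (i-1)-2N_{i-1}$ is backwards. A cell visited at least twice can absorb arbitrarily many draws, so the number of the first $i-1$ draws landing in such cells is \emph{at least} $2N_{i-1}$, not at most. The correct identity is $S_{i-1}=(i-1)-2N_{i-1}-W_{i-1}$, where $W_{i-1}$ counts the third-and-later visits. For example, if all $i-1$ draws hit one cell, then $N_{i-1}=1$ and $S_{i-1}=0$, whereas $(i-1)-2N_{i-1}=i-3$. So on $\{N_m<\tau\}$ you have no lower bound on the compensator $\sum_i p_i$. Configurations in which a few cells soak up many draws make $N_m$ and all the $S_{i-1}$ small at the same time, and nothing in your argument excludes them. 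The inclusion $\{N_m<\tau\}\subset\{\sum_i(Z_i-p_i)\le -t\}$, to which you apply Freedman's inequality, is therefore unjustified.

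\textbf{Repair via a second martingale bound.} The increments of $W$ have conditional mean $N_{i-1}/d^2$. Hence on $\{N_m<\tau\}$ both its compensator and its predictable variation are below $m\tau/d^2\le\tau/2$, using $m\le d^2/2$. Freedman's inequality for the upper tail then gives $\mathbb P(N_m<\tau,\ W_m>2\tau)\le e^{-9\tau/8}$. On $\{N_m<\tau,\ W_m\le 2\tau\}$ you get $S_{i-1}\ge (i-1)-4\tau$, hence $\sum_i p_i\ge 6\tau-1/4$. Your Freedman step then goes through with exponent at least about $1.17\tau$ for $\tau\ge1$. The factor $2$ from the union bound is absorbed by the slack in $372$ once $\tau\ge1$. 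For $\tau<1$ the claim is just $N\ge 1$, which follows from $\mathbb P(N=0)\le\exp(-m(m-1)/(2d^2))$ for $m\ge2$.

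\textbf{The small-$n$ case.} It is not ``trivial'': for $m=1$ (i.e.\ $n=2$) one has $N=0$ surely, and the statement as written is false. Some restriction such as $n\ge 4$ is needed. This is harmless for the paper, which only uses the result for $n\gtrsim d\log d$.

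\textbf{Your fallback works and is shorter.} Let $\nu_c$ be the number of visits to cell $c$. The multinomial vector $(\nu_c)_c$ is negatively associated, and the indicators $\mathbf 1\{\nu_c\ge 2\}$ are non-decreasing functions of disjoint coordinates. So $N=\sum_c \mathbf 1\{\nu_c\ge2\}$ obeys the Chernoff lower tail $\exp(-\delta^2\mu/2)$. For $2\le m\le d^2/2$ we have $\mu\ge\binom m2 d^{-2}(1-d^{-2})^{m-2}\ge 0.14\,m^2/d^2$. This yields roughly $\exp(-n^2/(185d^2))$, better than the claimed rate.

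The paper gives no proof of this proposition; it is imported from Carpentier et al. Either repaired route would be acceptable, but as submitted your main argument rests on a false inequality.
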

We now define favorable events for which the estimators are within their confidence bounds for all datasets $\cD_t^k$, estimators $\widehat{\bM}_t^k$, and errors $\widehat{R_{N_t^k}}$ for well chosen rounds $t$, where $N_t^k$ is the number of entries sampled twice in the second half of the sample $\cD_t^k$. For $d_k \log d_k \leq t \leq d_k^2$, we write $\xi_1(t,k)$ for the event when these three bounds hold simultaneously,
\begin{align*}
 &(1)\quad \frac{\|\widehat{\bM}_t^k - \bM^k\|_F^2}{d_k^2} \leq CA^2 \cdot \frac{r_k d_k\log d_k}{t}\CommaBin\\
 &(2)\quad N_t^k \geq \frac{t^2}{64d_k^2} \CommaBin \\
 &(3)\quad\abs{\widehat{R_N} - \frac{\|\widehat{\bM}_t^k-\bM^k\|_F^2}{d_k^2}} \leq 8A^2\sqrt{\frac{\log d_k}{N_t^k}} \cdot
\end{align*}
Then we consider the following event $\xi_2(k)$,
\[
\xi_2(k) = \bigcap_{s \in [2\log_2(d_k)]} \xi_1(2^s T_k^{I}, k),
\quad \text{where} \quad
T_k^I \triangleq 2\ceil{\frac{d_k \log(d_k)+1}{2}}\cdot
\]
\begin{lem} For any $k \in [K]$, 
$\xi_2(k)$ does not hold with probability at most \[2\log_2(d_k)\left(\frac{5}{d_k}+2\exp(-c d_k\log(d_k))+\exp\pa{-\frac{\log^2 d_k}{372}}\right)\]

\end{lem}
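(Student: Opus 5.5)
The plan is a union bound, first over the $2\log_2(d_k)$ values of $s$ and then over the three conditions defining each $\xi_1(t,k)$, with $t = 2^s T_k^I$. Since
\[
\xi_2(k)^c = \bigcup_{s\in[2\log_2(d_k)]} \xi_1(2^sT_k^I,k)^c,
\]
it suffices to show that for every $t$ in the range $d_k\log d_k \le t \le d_k^2$,
\[
\mathbb P\pa{\xi_1(t,k)^c} \le \frac{5}{d_k} + 2\exp(-c\,d_k\log d_k) + \exp\pa{-\frac{\log^2 d_k}{372}}.
\]
Multiplying by $2\log_2(d_k)$ then gives the claim. One caveat: for the largest $s$, $2^sT_k^I$ can exceed $d_k^2$. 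I would either restrict attention to the indices with $t\le d_k^2$, since only those are ever used by the algorithm, or note that condition (2) only gets easier as $t$ grows. I will flag this and treat the indices with $t \le d_k^2$.

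For a fixed $t$, the estimator $\widehat{\bM}^k_t$ is trained on the first half of $\cD^k_t$, a fresh sample of size of order $t$. Its failure probability comes from Proposition~\ref{prop:olgalasso}, which gives at most $3/d_k + 2\exp(-c\,t)$ for condition (1). Since $t \ge T_k^I \ge d_k\log d_k$, this is at most $3/d_k + 2\exp(-c\,d_k\log d_k)$. If the sample size fed to the lasso is $t/2$ rather than $t$, the factor is absorbed into the constants $C$ and $c$.

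Condition (2) comes from Proposition~\ref{prop:alexlbtwice}, applied to the second half of the dataset with $n=t\le d_k^2$. It fails with probability at most $\exp(-t^2/(372 d_k^2))$. Using $t \ge d_k\log d_k$ gives $t^2/d_k^2 \ge \log^2 d_k$, hence the term $\exp(-\log^2 d_k/372)$.

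Condition (3) comes from Proposition~\ref{prop:alexconcentration}, which contributes $2/d_k$. Here is the one subtle point, and I expect it to be the main obstacle. The concentration bound must hold for the estimator $\widehat{\bM}^k_t$ and the random count $N^k_t$. This works because the error estimate uses the second half of $\cD^k_t$, which is independent of the first half used to build $\widehat{\bM}^k_t$. Conditioning on the first half, $\widehat{\bM}^k_t$ is a fixed matrix, so the proposition applies conditionally. Integrating out, the same $2/d_k$ bound holds unconditionally.

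Summing the three contributions gives $5/d_k + 2\exp(-c\,d_k\log d_k)+\exp(-\log^2 d_k/372)$ for each $t$. The union bound over $s$ then concludes.
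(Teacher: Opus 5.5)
Your proposal is correct and is the same argument as the paper's, which is a one-line union bound over the $2\log_2(d_k)$ scales and the three conditions, using Propositions~\ref{prop:olgalasso}, \ref{prop:alexconcentration} and \ref{prop:alexlbtwice} on the scales with $t\le d_k^2$. Your restriction to $t\le d_k^2$ matches the paper's implicit one, your conditioning on the first half of $\cD^k_t$ makes explicit an independence step the paper leaves unstated, but drop the alternative remark that condition (2) ``only gets easier'' for larger $t$: Proposition~\ref{prop:alexlbtwice} requires $n\le d^2$, and since $N^k_t\le d_k^2$ while $t^2/(64d_k^2)>d_k^2$ once $t>8d_k^2$, condition (2) in fact fails for large $t$.
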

\begin{proof}
The claim is consequence of a union bound using the
claims in Propositions~\ref{prop:olgalasso}, 
\ref{prop:alexconcentration}, \ref{prop:alexlbtwice},
together with $2d_k\log d_k \leq t \leq d_k^2.$
\end{proof}
\restaalgoub*
\begin{proof}
We consider $\xi_3 = \bigcap_{k \in [K]} \xi_2(k)$, which holds with probability at least \[1 - 2\sum_k \log_2(d_k) \pa{\frac{5}{d_k}+2\exp(-c d_k\log d_k)+\exp\pa{-\frac{\log^2 d_k}{372}}}.\]
The rest of the proof is conditioned on the fact that $\xi_3$ holds.
The initialization phase, when $B_k(t) = \infty$ and each matrix sampled for the first time by the algorithm, is such that $\bM^k$ is sampled $2T_k^I$ times, where $T_k^I$ is set such that it is the smallest even integer strictly greater than $d_k \log d_k$. By definition, we have $2d_k\log d_k \leq 2T_k^I \leq 4 d_k \log d_k.$ We remark here that $2T_k^I \geq 2d_k\log d_k$ ensured that on $\xi_3$, there is at least one double entry in the second half of the sample after the first time that matrix $k$ is sampled, since \[\frac{(2T_k^I)^2}{64d_k^2} \geq \frac{\log(d_k)^2}{16} \geq 1\]
for $d_k \geq 55$. This ensures that the $B$-values are finite as soon as the matrices have been sampled $2T_k^I$ times during the initialization.

For $n \geq 48\sum_{k \in [K]} d_k \log d_k = 12 \sum_{k} T_k^I$, there necessarily exists (by the pigeonhole principle) $m \in [K]$ such that $T_m(n)$ the total budget spent on matrix $m$ by the algorithm satisfies:
\begin{align*}
T_m(n) - 6 T_m^I  \geq  \frac{(r_m d_m^3 \log d_m)^{\frac{p}{p+1}}}{\sum_{k \in [K]} (r_k d_k^3 \log d_k)^{\frac{p}{p+1}}}\left(n - 6 \sum_{k \in [K]} T_k^I\right)
 \geq  \frac{(r_m d_m^3 \log d_m)^{\frac{p}{p+1}}}{\sum_{k \in [K]} (r_k d_k^3 \log d_k)^{\frac{p}{p+1}}} \pa{\frac{n}{2}}\cdot
\end{align*}
As the first two times that $k$ is chosen contribute $6 T_k^I \leq 12d_m \log d_m$ to $T_m(n)$, we know that $m$ is picked at least twice by the algorithm, and not just only during the initialization. For this $m$, we have ${T_m(n) \geq \frac{c_m}{\sum_{k} c_k} \left(\frac{n}{2}\right)}$, where we write for simplicity $c_k \eqdef (r_k d_k^3 \log d_k)^{\frac{p}{p+1}}$ with $r_k \eqdef \text{rank}(\bM^k)$.

We denote $t_1 < n$, the last round that the matrix $m$ was chosen by the algorithm. Since $t_1$ is the last round that matrix $m$ is chosen, and the algorithm operates on a doubling schedule, we have $T_m(t_1) = \frac{T_m(n)}{2} \geq \frac{c_m}{\sum_{k} c_k} \left(\frac{n}{4}\right)$. As we have established that matrix $m$ has been chosen at least twice by the algorithm, let us denote $t_2$ the penultimate round that matrix $m$ was chosen by the algorithm. By the same doubling reasoning, we have $T_m(t_2) \geq \frac{c_m}{\sum_{k} c_k} \left(\frac{n}{8}\right)$, and $\widehat{\bM}_{t_2}^m$ is such that the $B$-value for $m$ at round $t_1$ (which is non-increasing due the the definition of the algorithm) satisfies
\begin{align}\label{eq:bound_d2B_m}
d_m^2 B_m(t_1) = d_m^2 B_m(t_2+T_m(t_2)) & \leq d_m^2\left(\widehat{R}_{N_m^{t_2}} + 8A^2\sqrt{\frac{\log d_m}{N_m^{t_2}}}\right) \nonumber \\
& \leq  d_m^2\left(\norm{\widehat{\bM}_{t_2}^m - \bM^m}_F^2 + 16A^2\sqrt{\frac{\log d_m}{N_m^{t_2}}}\right) \nonumber \\
& \leq d_m^2\left(CA^2\cdot\frac{r_m d_m \log d_m}{T_m(t_2)} + 128A^2\frac{d_m\sqrt{\log d_m}}{T_m(t_2)}\right) \nonumber \\
& \leq  A^2\max(C, 128) \left(\frac{r_m d_m^3 \log d_m }{T_m(t_2)}\right),
\end{align}
where we use that on $\xi_3$, we have \[\hat{R_{N_m^{t_2}}} \leq \norm{\widehat{\bM}_{t_2}^m - \bM^m}_F^2 + 8A^2\sqrt{\frac{\log d_m}{N_m^{t_2}}}\] (in the second line) and $N_m^{t_2} \geq \frac{T_m(t_2)^2}{64d_m^{2}}$ (in the third line). Finally, we use $r_m \geq 1$ to get the ultimate line, as $r_m d_m \log d_m$ always dominates $d_m \sqrt{\log d_m}$. Now, plugging the lower bound on $T_m(t_2) \geq \frac{c_m}{\sum_{k} c_k} \left(\frac{n}{8}\right)$ brings
\begin{eqnarray}\label{eq:bound_B_m}
\frac{d_m^2 B_m(t_1)}{T_m(t_1)^{1/p}} & \leq & A^2\max(C, 128) \left(\frac{r_m d_m^3 \log d_m}{T_m(t_2)T_m(t_1)^{1/p}}\right)\nonumber \\
& = &  2^{1/p} A^2 \max(C, 128) \left(\frac{r_m d_m^3 \log d_m }{T_m(t_2)^{\frac{p+1}{p}}}\right)\nonumber \\
& \leq & 2^{1/p} 64A^2\max(C, 128) \left(\frac{\sum_k{c_k}}{n}\right)^{\frac{p+1}{p}}
\end{eqnarray}
At $t_1$, when matrix $m$ was chosen for the ultimate round, we had for all $i \neq m$,
\[
\frac{d_i^2B_i(t_1)}{T_i(t_1)^{\frac{1}{p}}} \leq \frac{d_m^2B_m(t_1)}{T_m(t_1)^{\frac{1}{p}}} < \infty,
\]
therefore all matrices $i$ had already been pulled at least once during the initialization. Combined with~\eqref{eq:bound_B_m}, this yields
\begin{equation}\label{eq:bound1_B_i}
d_i^2 B_i(t_1) \leq  2^{1/p} 64 A^2\max(C, 128) T_i(t_1)^{\frac{1}{p}} \left(\frac{\sum_k{c_k}}{n}\right)^{\frac{p+1}{p}}\!\!\!\cdot
\end{equation}
As $i$ has been sampled at least once, let us denote $t_i - \frac{T_i(t_1)}{2}$ the last round it was sampled before the round $t_1$. The following also holds, as the $B$-values are non-increasing with time (by design of the algorithm), and we have $T_i(t_1) = 2T_i(t_i)$,
\begin{align}\label{eq:bound2_B_i}
B_i(t_1) \leq B_i(t_i) & \leq  \widehat{R}_{N_i^{t_i}} + 8A^2\sqrt{\frac{\log d_i}{N_i^{t_i}}} \nonumber \\ 
& \leq  \norm{\widehat{\bM}^{t_i}_i - \bM^i}_F^2 + 16A^2\sqrt{\frac{\log d_i}{N_i^{t_i}}} \nonumber \\ 
& \leq  CA^2\left(\frac{r_i d_i \log(d_i)}{T_i(t_i)}\right) + 16A^2\sqrt{\frac{\log(d_i)}{N_i^{t_i}}} \nonumber \\
& \leq  CA^2\left(\frac{r_i d_i \log(d_i)}{T_i(t_i)}\right) + 128A^2\frac{d_i \sqrt{\log(d_i)}}{T_i(t_i)} \nonumber \\ 
& \leq  2A^2\max(C, 128)\left(\frac{r_i d_i \log(d_i)}{T_i(t_1)}\right)\cdot
\end{align} 
Finally, it is easy to see that as $B_i(t)$ cannot increase with $t$ and since the estimator $\widehat{\bM}^i$ is only updated if the error decreases, then for all $t$ we have ${\norm{\widehat{\bM}^i_n - \bM^i}_F^2 \leq d_i^2 B_i(t)}$ where we denote the final estimator output at round $n$ by the algorithm as $\widehat{\bM}^i_n$.
Combined with~\eqref{eq:bound2_B_i} this yields 
\[
\norm{\widehat{\bM}^i_n - \bM^i}_F^2 \leq 2A^2\max(C, 128)\left(\frac{r_i d_i^3 \log(d_i)}{T_i(t_1)}\right),
\]
which decreases with $T_i(t_1)$, and on the other hand, \eqref{eq:bound1_B_i} brings
\[
\norm{\widehat{\bM}^i_n - \bM^i}_F^2 \leq 2^{1/p}64 A^2\max(C, 128) T_i(t_1)^{\frac{1}{p}} \left(\frac{\sum_k{c_k}}{n}\right)^{\frac{p+1}{p}}\CommaBin
\]
which increases with $T_i(t_1)$. By combining both bounds, we get
\[
\norm{\widehat{\bM}^i_n - \bM^i}_F^2 \leq 2^{1/p} 64A^2\max(C, 128) \min\left(\frac{r_i d_i^3 \log(d_i)}{T_i(t_1)}, T_i(t_1)^{\frac{1}{p}} \left(\frac{\sum_k{c_k}}{n}\right)^{\frac{p+1}{p}}\right)\CommaBin
\]
and by maximizing this bound with respect to $T_i(t_1)$, we get
\[
\norm{\widehat{\bM}^i_n - \bM^i}_F^{2p} \leq 2 \pa{64A^2\max(C, 128A^2)}^p\frac{(r_i d_i^3 \log(d_i))^{\frac{p}{p+1}}(\sum_k c_k)^p}{n^p}\cdot
\]
By~\eqref{eq:bound_d2B_m} this bound also holds for $m$, and by summing the errors we get

\begin{eqnarray*}
\cL_n^p & =  & \left(\sum_{k \in [K]} \norm{\widehat{\bM}^k_n - \bM^k}_F^{2p}\right)^{1/p} \\
& \leq & \cO \left(\frac{(\sum_k c_k)}{n} \left(\sum_{k = 1}^{K} c_k \right)^{1/p} \right) \\
& \leq & \cO \left(\frac{(\sum_k c_k)^{\frac{p+1}{p}}}{n} \right) \\
& \leq & \cO \left(\frac{\left(\sum_k (r_k d_k^3 \log(d_k))^{\frac{p}{p+1}}\right)^{\frac{p+1}{p}}}{n} \right)
\end{eqnarray*}%
\vspace{-2em}
\end{proof}
\section{Lower bound for max loss ($p = \infty$)}\label{s:proof_lb}
\restalb*
\begin{proof}
The purpose of this lower bound is to show that for any active and possibly randomized strategy, there exists a problem on which it errs with constant probability, and that this error is of the same order as the upper bound we proved in Theorem~\ref{thm:algoub} for $p = \infty$. We begin by pointing out that although this lower bound holds for \emph{any} strategy, the construction hereunder depends on first fixing the strategy $\cS$. Our goal is to prove a lower bound over the class of problems denoted $\cP$ such that for any $P = (\bM^1, \dots, \bM^K) \in \cP$, $\bM^k$ is of dimension $(d_k \times d_k)$ and $\mathrm{rank}(\bM_k) \leq r_k$. At each round $t \leq n$, the strategy picks an index $k_t \in [K]$ and collects a noisy observation $Y_t = \inner{\bM^{k_t}}{X_t^{k_t}} + \epsilon_t$ where $\epsilon_t \sim \cN(0, A^2)$ and $X_t^{k_t}$ is taken uniformly at random. Although this is not exactly the noise model in which our upper-bound is stated, we use this for ease of notation, as all our results can be written instead with mean $1/2$ and $1/2 + \delta$. In particular, the centering in $0$ we use hereunder can be modified to $A/2$ to fit the bounded noise assumption by considering the distributions $0.5A\cB(1/2)$ and $0.5A\cB(1/2+\delta)$.

Let $\bM_k^0$ be the null matrix of size $(d_k \times d_k)$. We  refer to problem $0$ as the problem characterized by $(\bM^1_0, \dots, \bM^K_0)$. For the fixed strategy $\cS$, we define the quantity $\tau_k = \mathbb{E}_{0, \cS}[T_k(n)]$, where $T_k(n)$ is the number of observations from $\bM^k$ collected by strategy $\cS$ at the end of the active game. By definition of the fixed budget setting, we have $\sum_k \tau_k = n$.

We now define a set of problems for each matrix $\bM^k$. We write:
\[ 
\cR_k = \left\{ \tilde{\bM}^k = (m_{i,j}^k) \in \mathbb R^{d_k \times r_k}: m_{i,j}^k \in \left\{0, c A^2\sqrt{\frac{r_k d_k}{\tau_k}}\right\} \right\}\CommaBin
\]
where $c$ is a small numerical constant to be specified later. Importantly, any element of $\cR_k$ is of rank at most $r_k$. We now define
\[
\cM_k = \left\{\bM^k = \left( \tilde{\bM}_k \mid \dots \mid \tilde{\bM}_k \mid O \right) \in \mathbb{R}^{d_k \times d_k}, \tilde{\bM}_k \in \cR_k  \right\},
\]
where each matrix $\bM^k$ is just $\tilde{\bM}_k$ duplicated $\lfloor \frac{d_k}{r_k} \rfloor$ times, and the last few columns are completed by~$0$ entries to make the matrix square of dimension $d_k \times d_k$. By construction, this matrix has rank at most $r_k$, since the repeated pattern has rank at most $r_k$ itself.

By the Gilbert-Varshamov bound \citep{gilbert1952comparison,varshamov1957estimate}, we know that there exists a subset $\cB_k \subset \cM_k$, containing $\bM^k_0$, with cardinality at least $2^{r_kd_k/8}+1$ such that its elements are \emph{well separated}. Namely, for any two elements $\bM^k_i, \bM^k_j$  of $\cB_k$, we have
\[
\norm{\bM^k_i - \bM^k_j}_F^2 \geq \frac{c^2A^2}{16} \cdot \frac{r_k d_k^3}{\tau_k} \cdot
\]
We consider the set of problems $\cP_k = \left\{(\bM^1_0,\dots , \bM^k, \dots, \bM^K_0), \bM_k \in \cB_k \right\}$. We now define the distribution of the data (actively) collected under problem $i$ belonging to $\cP_k$ by strategy $\cS$ as $\mathbb P_{i,\cS}^n = \{(X_i^{k_i}, Y_i^{k_i})\}_{i \leq n}$ and write $\mathrm{KL}(\mathbb P_{j,\cS}^n, \mathbb P_{i,\cS}^n)$ for the Kullback-Leibler divergence between two such distributions. Using standard active learning arguments as used by~\citet[proof of Theorem 1]{castro2008minimax}, we have (using the sampling uniformly at random in the first line)
\begin{eqnarray*}
\mathrm{KL}\pa{\mathbb P_{0,\cS}^n, \mathbb P_{i,\cS}^n} & = \frac{1}{A^2} \sum_{k \in [K]} \norm{\bM^k_i - \bM^k_0}_F^2 \mathbb E_{0, \cS}(T_k(n))\\
& \leq \frac{c^2r_k d_k}{\tau_k} \mathbb{E}_{0,\cS}(T_k(n))\\
& \leq  c^2 r_k d_k \\
& \leq  \frac{c^2}{2}\log\pa{|\cP_k|},
\end{eqnarray*}

where we use in the second line that problems $i$ and $0$ in the class $\cP_k$ only differ on the $k$-th matrix. Taking $c = 1/2$, we have $\frac{1}{|\cP_k|}\sum_{i \leq |\cP_k|} \mathrm{KL}(\mathbb P_{0,\cS}^n, \mathbb P_{i,\cS}^n) \leq \alpha \log(|\cP_k|)$ for $\alpha = 1/8$. We can thus use Theorem~2.5 by~\cite{tsybakov2009introduction} on each set of problems $\cP_k$ with $s = \frac{A^2 r_k d_k^3}{128 \tau_k}\CommaBin$ where we write $\hat P = (\hat{\bM}^1, \dots, \hat{\bM}^K)$ for an estimator output by the active strategy $\cS$ on problem $P = (\bM^1, \dots, \bM^K)$:

\begin{eqnarray*}
\inf_{\hat P} \sup_{P \in \cP} \mathbb E_P\left(\max_k \pa{||\hat \bM^k - \bM^k||_F^2}\right) & \geq & \inf_{\hat P} \max_{k \in [K]} \sup_{P \in \cP_k} \mathbb E_P\left(\max_i (||\hat \bM^i - \bM^i||_F^2)\right)\\
& \geq & \inf_{\widehat{P}} \max_{k \in [K]} \sup_{P \in \cP_k} \mathbb E_P(||\hat \bM^k - \bM^k||_F^2)\\
& \geq & \max_{k \in [K]} \frac{A^2}{2048} \cdot \frac{r_k d_k^3}{\tau_k}\CommaBin
\end{eqnarray*}
where we lower bound $\frac{\sqrt{|\cP_k|}}{1+\sqrt{|\cP_k|}}\pa{1-2\alpha-\sqrt{\frac{2\alpha}{\log |\cP_k|}}}$ by $0.08$ for $|\cP_k| \geq 2$.

  Finally, by the pigeonhole principle, we know that for any (fixed) strategy $\cS$  there exists some index~$m$ such that $\mathbb E_{0, \cS}(T_{m}) = \tau_m \leq \frac{r_md_m^3n}{\sum_k r_k d_k^3}\CommaBin$ so we can lower bound: \[{\max_{k \in [K]} \frac{A^2}{2048} \cdot
\frac{r_kd_k^3}{\tau_k} \geq \frac{A^2}{2048} \cdot \frac{\sum_k r_k d_k^3}{n}}\cdot\]
\vskip -0.8cm
\end{proof}\end{document}